\newcommand{\B}{\mathcal{B}}
\newcommand{\relu}{\text{ReLU}}
\newcommand{\SP}{\text{SP}}
\newtheorem{assumption}[theorem]{Assumption}
\begin{document}

\title{Generalization Error Analysis of Neural networks with Gradient Based Regularization
}

%\titlerunning{Short form of title}        % if too long for running head

\author{Lingfeng Li   \and
        Xue-Cheng Tai \and Jiang Yang
}

%\authorrunning{Short form of author list} % if too long for running head

\institute{Lingfeng Li \at
              Department of Mathematics, Hong Kong Baptist University, Hong Kong, China\\
              Department of Mathematics, Southern University of Science and Technology, Shenzhen, China
              \email{lingfengli@life.hkbu.edu.hk}           %  \\
%             \emph{Present address:} of F. Author  %  if needed
           \and
           Xue-Cheng Tai \at
           Department of Mathematics, Hong Kong Baptist University, Hong Kong, China\\
           \email{xuechengtai@hkbu.edu.hk}\\
           \and
           Jiang Yang \at
           Department of Mathematics, Southern University of Science and Technology, Shenzhen, China\\
           \email{yangj7@sustech.edu.cn}
}

\date{Received: date / Accepted: date}
% The correct dates will be entered by the editor

\maketitle

\begin{abstract}
We study gradient-based regularization methods for neural networks. We mainly focus on two regularization methods: the total variation and the Tikhonov regularization. Applying these methods is equivalent to using neural networks to solve some partial differential equations, mostly in high dimensions in practical applications. In this work, we introduce a general framework to analyze the generalization error of regularized networks. The error estimate relies on two assumptions on the approximation error and the quadrature error. Moreover, we conduct some experiments on the image classification tasks to show that gradient-based methods can significantly improve the generalization ability and adversarial robustness of neural networks. A graphical extension of the gradient-based methods are also considered in the experiments.
\keywords{Machine learning \and Regularization \and Generalization error \and Image classification }
% \PACS{PACS code1 \and PACS code2 \and more}
% \subclass{MSC code1 \and MSC code2 \and more}
\end{abstract}

\section{Introduction}
%\subsection{Background}
Deep neural networks (DNNs), which are compositions of some linear and non-linear mappings, have become the most popular tool in artificial intelligence. Its ability to fit complex functions helps it achieve state-of-the-art performances and beat other methods by a huge margin in many areas, such as image processing, video processing, and natural language processing. For a complete introduction to deep learning, one can refer to \cite{goodfellow2016deep,higham2019deep}. Recently, some applied mathematicians have also successfully applied DNNs to solve some partial differential equations (PDEs) \cite{sirignano2018dgm,weinan2018deep,han2018solving}. The main advantage of DNNs is that they can solve very high-dimensional problems that are intractable for traditional numerical methods. Some literature about the expressive power of DNNs are \cite{barron1993universal,cybenko1989approximation,hornik1989multilayer,zhou2020universality}.

Though DNNs have such great expressive power, the training of DNNs is not easy, especially for those having enormous amounts of parameters. One of the most common issues in training DNNs is overfitting, i.e., the model fits the training data well but performs poorly on the testing data. Overfitting is very likely to happen when the number of parameters is significantly larger than the number of training samples. To prevent models from overfitting, some regularization techniques are usually applied. Two types of regularization methods are commonly used in practice: implicit regularization and explicit regularization. The implicit method is often induced by the optimization scheme, e.g., the early stopping method. The explicit method is introducing a regularization term directly into the loss function.

Another issue we concern about the DNNs is stability, which is also referred to as adversarial robustness. It has been shown that DNNs can be fooled by adding very small perturbation to the inputs \cite{goodfellow2015explaining,szegedy2013intriguing}. The algorithms to find such a perturbation is called adversarial attacks, such as FGSM \cite{goodfellow2015explaining}, PGD \cite{madry2018towards}, and one-pixel attack \cite{su2019one}. To improve the robustness of our networks, various adversarial defensive strategies have been developed \cite{goodfellow2015explaining,kannan2018adversarial}, and most of them are in the form of explicit regularization.

Regularization plays important roles not only in deep learning but also in variational models where the regularization terms are often designed based on some prior knowledge \cite{rudin1992nonlinear,chan2001active,vese2002multiphase}. One of the most popular regularization methods is the gradient-based regularization like the total variation (TV) \cite{rudin1992nonlinear} and the Tikhonov regularization \cite{tikhonov2013numerical}. The Euler-Lagrangian equations of the regularized problems are some PDEs. Numerical methods like finite difference are commonly used to solve them. However, due to the curse of dimensionality, most of the traditional numerical methods are only able to handle low-dimensional problems. Therefore, using DNNs to approximate solutions to variational models and PDEs have attracted extensive attentions in recent years \cite{weinan2018deep,han2018solving,sirignano2018dgm,oberman2020partial,raissi2018hidden,raissi2019physics}.

For classical numerical methods, error analysis is one of the most critical parts of the research. Usually, we would expect an explicit bound on the difference between the numerical solution and the true solution. If the error bound depends on the computed numerical solution, we call it a posterior error. Otherwise, we call it a priori error. However, because of the high complexities of neural networks and the randomness induced by the optimization scheme, obtaining such type of error bound for deep learning algorithms is not feasible in general. Instead, we can analyze the generalization error. Some generalization error analysis have been developed for some PDE-related problems \cite{mishra2020estimates,mishra2020estimates2,lu2021priori,muller2021error}, but the study of the generalization error for general variational problems is still very few, especially for total variation related problems.

%\subsection{Main contribution}
In this work, We introduce a general framework for the generalization error analysis when DNNs are used to solve variational models. Our error estimate consists of two parts: the approximation error and the quadrature error. We also derive an error bound for two-layer neural networks using the proposed framework. In the numerical experiments, we test the DNNs trained with the TV and the Tikhonov regularization for image classification problem. We also extend the TV and Tikhonov regularization methods to problems defined on graphs. Our results suggest that the gradient-based methods can significantly improve both the generalization ability and stability of DNNs, especially when the training samples are few. 

%\subsection{Structure of this paper}
The rest of this paper is organized as follows. In section \ref{sec2}, we first give a brief introduction to some popular explicit regularization methods and adversarial training methods in deep learning. Then we present the definition of the total variation and Tikhonov regularization along with their extensions for problems defined on graphs. In section \ref{sec3}, we develop our generalization error analysis for some regularized variational models. In section \ref{sec4}, we apply the proposed framework to derive an error estimate for the two-layer neural networks. Some numerical experiments of image classification tasks are conducted in section \ref{sec5} to show the effectiveness of the gradient-based regularization methods for DNNs. Lastly, some discussions and conclusions are given in section \ref{sec6}.

\section{Preliminaries} \label{sec2}
\subsection{Explicit regularization and adversarial training for DNN}
Let us start with a $n$-phase classification problem in deep learning. We assume the data belong to a bounded domain $\Omega\subset\mathbb{R}^d$ and follow a probability distribution $\rho_x(\Omega)$. There exist a ground true solution $y(x)$ that assign each $x\in\Omega$ an integer label. To solve the classification problems, we define a neural network $\phi(x;\theta):\Omega\rightarrow\mathbb{R}^n$ where $\theta$ denotes the set of parameters in $\phi$. Let $\phi_k(x;\theta)$ be the $k$th component of $\phi(x;\theta)$ which represents the likelihood of $x$ belonging to the $k$th class. Then, the classification result of $\phi$ is obtained by $\arg\max_k\{\phi_k(x;\theta)\}$. When $n=2$, we can use a scalar-valued network to solve the problem and the labeling result is obtained by taking the sign of its output. 

When training the neural network, we need a loss function $D(\phi(x;\theta), y(x))$ to measure how well the network outputs $\phi$ fit the ground truth $y$. Some popular choices for $D$ are $l_1$ loss, $l_2$ loss, and the cross-entropy loss. Our  goal is to find an optimal $\theta$ to minimize the expected loss:
\begin{equation}
    L(\theta)=\mathbb{E}_{\rho_x}\left[D(\phi(x;\theta),y(x))\right]. \label{eq:gen_loss}
\end{equation}
In practice, we only observe a set of training data sampled from $\rho_x$: $\{(x_i,y_i)\}_{i=1}^N$ where $y_i=y(x_i)$. Therefore, the loss function we are actually working with is the empirical loss:
\begin{equation}
    L_{N}(\theta)=\frac{1}{N}\sum_{i=1}^N D(\phi(x_i;\theta),y_i), \label{eq:train_loss}
\end{equation}
which is a discretized version of the expected loss (\ref{eq:gen_loss}).

A very common issue in machine learning is the overfitting. When the number of training samples is significantly less than the number of parameters, we may easily achieve very small empirical loss $L_{N}(\theta)$ but the expected loss $L(\theta)$ remains very large. To prevent models from overfitting the data, some regularization techniques are necessary during the training procedure. Generally, there are two types of regularization methods, namely the implicit regularization and the explicit regularization. The implicit method is usually induced by optimization algorithms, e.g., early stopping strategies. The explicit method is directly adding a regularization term $R(\phi(x;\theta))$ to the loss function:
\begin{equation}
    L_N^R(\theta)=\frac{1}{N}\sum_{i=1}^N D(\phi(x_i;\theta),y_i) + \alpha R(\phi(x;\theta)), \label{eq:explicit_regularization}
\end{equation}
where $\alpha>0$. In this work, we mainly focus on the explicit method. The simplest explicit regularization method for deep learning is the weight decay:
\begin{equation}
    \frac{1}{N}\sum_{i=1}^N D(\phi(x_i;\theta),y_i) + \alpha\Vert\theta\Vert^2. \label{eq:weight_dacay}
\end{equation}
Some other regularization methods have been proposed recently for defending adversarial attacks. Adversarial attacks are algorithms that find bounded perturbations of the input data to fool the neural networks. If a neural network is trained without any regularization or defensive strategies, its prediction results may easily be altered even under very small perturbations. A simple adversarial training method was first introduced to defend adversarial attacks. The idea is using the data term in (\ref{eq:explicit_regularization}) of perturbed images as the regularization:
\begin{equation}
    \frac{1}{N}\sum_{i=1}^N (1-\alpha)D(\phi(x_i;\theta),y_i) + \alpha D(\phi(x_i+\delta(x_i);\theta),y_i), \label{eq:adversarial_training}
\end{equation}
where $\delta(x_i)$ is the perturbation generated by some attack methods for the input sample $x_i$. By Taylor expansion, (\ref{eq:adversarial_training}) approximately equals to
\begin{equation*}
    \frac{1}{N}\sum_{i=1}^N D(\phi(x_i;\theta),y_i) + \alpha |\nabla D(\phi(x_i;\theta),y_i)|
\end{equation*}
or
\begin{equation*}
    \frac{1}{N}\sum_{i=1}^N D(\phi(x_i;\theta),y_i) + \alpha |\nabla D(\phi(x_i;\theta),y_i)|^2,
\end{equation*}
when $\delta=\nabla_{x_i}D/|\nabla_{x_i}D|$ or $\delta=\nabla_{x_i}D$ \cite{oberman2020partial}.
This method is straightforward and effective for defending adversarial attacks, so it is adopted as the baseline method in many research papers. Later, another popular regularization method, called adversarial logit pairing (ALP), was introduced in \cite{kannan2018adversarial}:
\begin{equation}
    \frac{1}{N}\sum_{i=1}^N \left[D(\phi(x_i;\theta),y_i) + \alpha| \phi(x_i+\delta(x_i);\theta)-\phi(x_i;\theta)|\right] \label{eq:alp},
\end{equation}
where $|\cdot|$ represents the vector $l_2$ norm and can be replaced by other norms.

\subsection{TV and Tikhonov regularization}
Variational models are used in many applications of image processing, such as denoising, inpainting and segmentation. Usually, we search for a solution $u(x):\Omega\rightarrow\mathbb{R}^n$ from a certain space $U$ by minimizing an energy functional. Similar to (\ref{eq:explicit_regularization}), the energy usually consists of a data fidelity term $D(u(x))$ and a regularization term $R(u(x))$:
\begin{equation*}
    \int_{\Omega} D(u(x)) + \alpha R(u(x)) dx, \quad  u\in U.
\end{equation*}
One of the earliest regularization method is the Tikhonov regularization:
\begin{equation*}
    R(u(x)) = \sum_{k=1}^n|\nabla u_k(x)|^2
\end{equation*}
for $n\geq 1$.
However, in many image processing tasks, using the Tikhonov regularization may oversmooth the solutions and blur the edges. The more appropriate regularization for image problems is the total variation which was first introduced by Rudin, Osher and Fatemi in the famous ROF model \cite{rudin1992nonlinear}. The TV regularization for classification problems is written as:
\begin{equation*}
    R(u(x)) = \sum_{k=1}^n|\nabla u_k(x)|
\end{equation*}
for $n\geq 1$. The gradient here is the distributional derivative, so the total variation can be defined for some discontinuous functions. When $u_k(x)$ is the indicator function of a region, the $\int_{\Omega}|\nabla u_k(x)|dx$ equals the length of the boundary of this region. Various efficient algorithms have been developed to solve the TV regularized models \cite{chambolle2004algorithm,wu2010augmented,goldstein2009split,chan1999nonlinear,yuan2010study}. In the early stage of the research of neural networks, the Tikhonov regularization method has also been introduced as the double backpropagation method \cite{drucker1992improving}:
\begin{equation*}
    L_N^{Tik}(\phi)=\frac{1}{N}\sum_{i=1}^N \left(D(\phi(x_i;\theta),y_i) + \alpha \sum_{k=1}^n |\nabla_{x}\phi_k(x_i;\theta)|^2\right), \quad \phi\in V,
\end{equation*}
where $V$ is a set of neural networks.
This method can reduce the generalization error in many numerical experiments. When the data term $D$ is in the quadratic form, it has also been shown that training with Tikhonov regularization is equivalent to adding random noise to the training data \cite{bishop1995training}. Similarly, we can also apply the TV regularization to neural networks training:
\begin{equation*}
    L^{TV}_N(\phi)=\frac{1}{N}\sum_{i=1}^N \left(D(\phi(x_i;\theta),y_i) + \alpha \sum_{k=1}^n |\nabla_{x}\phi_k(x_i;\theta)|\right), \quad \phi\in V.
\end{equation*}
\subsection{Graphical TV and Tikhonov regularization}
Graph models have many successful applications in data classification problems, especially in high dimensional cases. An undirected graph $G=(W,E,\omega)$ consists of a set of vertices $W=\{x_i\}_{i=1}^N$, a set of edges $E=\{(x_i,x_j)\}_{i\neq j}$, and the weights assigned to edges $\{\omega_{ij}\}_{i\neq j}$. In our setting, each vertex $x_i$ corresponds to a training sample. The weight $\omega_{ij}$ is usually a monotonic decreasing function of $d_{ij}$. Like the finite difference method for numerical methods of PDEs, we can define the differential operators on graphs in the discrete sense. Let $u:W\rightarrow\mathbb{R}$ be a function defined on $G$. The gradient of $u$ is defined as
\begin{equation*}
    \nabla u(x_i)(x_j) = \omega_{ij}(u(x_j)-u(x_i)).
\end{equation*}
Then, the total variation and Tikhonov regularization of $u$ are given as:
\begin{equation*}
    |\nabla u(x_i)| = \sum_{(i,j)\in E} \omega_{ij}|u(x_j)-u(x_i)|
\end{equation*}
and
\begin{equation*}
    |\nabla u(x_i)|^2 = \sum_{(i,j)\in E} \omega_{ij}|u(x_j)-u(x_i)|^2.
\end{equation*}
Notice that the total variation defined here is an anisotropic version instead of an isotropic version. Computing the full total variation or Tikhonov regularization may be expansive when $N$ is large, so some edges with small weights can be neglected. We usually select a set of nearest data to each $x_i$ as the neighbor set $N_i$, $i=1,2,\dots,N$. Graph models have been successfully applied in neural network training before to enhance the robustness \cite{wang2020adversarial}. One main challenge is how to determine the weight $\omega$ because it is not easy to define a suitable distance metric for high dimensional data. The most commonly used metrics is the Euclidean distance, but it is not appropriate for high dimensional data like images.

In this work, we adopt a very simple way to determine the weight without computing the distance. During the training stage, since we have access to the ground true labels of all training samples, we can simply assume that the distance between two data $d_{ij}$ is 0 if $y_i=y_j$, and is $\infty$ if $y_i\neq y_j$. Let $\omega_{ij}=\exp(-d_{ij})$, the weight can then be defined as
\begin{equation*}
    \omega_{ij}=\begin{cases} 1 &,y_i=y_j\\
     0 &,y_i\neq y_j
    \end{cases},
\end{equation*}
Consequently, the neighbor set $\mathbf{N}_i$ can simply be a set of samples belonging to the same class with $x_i$. The graphical TV and Tikhonov regularization for neural networks training are then written as:
\begin{equation*}
    L^{GTV}_N(\phi)=\frac{1}{N}\sum_{i=1}^N \left(D(\phi(x_i;\theta),y_i) + \alpha  \sum_{x_j\in\mathbf{N}_i}|\phi(x_j;\theta)-\phi(x_i;\theta)|\right), \quad \phi\in V
\end{equation*}
and
\begin{equation*}
    L^{GTik}_N(\phi)=\frac{1}{N}\sum_{i=1}^N \left(D(\phi(x_i;\theta),y_i) + \alpha  \sum_{x_j\in\mathbf{N}_i}|\phi(x_j;\theta)-\phi(x_i;\theta)|^2\right), \quad \phi\in V.
\end{equation*}

\section{Generalization error analysis for regularized neural networks} \label{sec3}
We consider the regularized expected loss:
\begin{equation}
    L^{R}(u)=\mathbb{E}_{\rho_x}\left(D(u(x),y(x))+\alpha R(u(x))\right), \quad u\in U\label{eq:energy_general}
\end{equation}
where $U$ id a function space and $R$ is the TV or Tikhonov regularization. For the sake of simplicity, we assume $\alpha=1$ in the rest of this paper and it is easy to extend our theory to any $\alpha>0$. Let $V\subset U$ be a set of neural networks with finite number of parameters. When training a neural network, what we actually minimize is the empirical loss which depends on a set of training data $\{x_i\}_{i=1}^N$:
\begin{equation}
    L^{R}_N(\phi)=\frac{1}{N}\sum_{i=1}^N\left(D(\phi(x_i;\theta),y(x_i))+ R(\phi(x_i;\theta))\right), \quad \phi\in V. \label{eq:energy_general_discrete}
\end{equation}
\begin{definition}[$\delta$-minimizer]
A function $\hat{u}\in U$ is said to be a $\delta$-minimizer of $L(u)$ if
\begin{equation*}
    L(\hat{u})\leq \inf_{u\in U}L(u)+\delta.
\end{equation*}
\end{definition}

Denote $\hat{u}=\arg\min_{u\in U}L^R(u)$ and $\hat{\phi}^N\in V$ be the $\delta$-minimizer of $L^R_N(\phi)$. For a general variational problem, we are interested in the generalization error, i.e, $\mathcal{E}(\hat{\phi}^N,\hat{u})=L^R(\hat{\phi}^N)-L^R(\hat{u})$. If $\mathcal{E}(\hat{\phi}^N,\hat{u})$ is small, we expect the network $\hat{\phi}^N$ can fit the unseen data well. 
%When the data term $D$ in $L^R$ has the form of squared error, we can show that the generalization error equals to the $H^1$ norm:
%\begin{equation}
%    \Vert\hat{\phi}^N-\hat{u}\Vert_{H^1(\rho_x)}^2:=\mathbb{E}_{\rho_x}\left(|\hat{\phi}^N-\hat{u}|^2+|\nabla\hat{\phi}^N-\nabla \hat{u}|^2\right).
%\end{equation}
%This type of error can also be estimated for some specific types of PDEs \cite{ma2019priori,mishra2020estimates,mishra2020estimates2}. 

To develop our theory, we first make the following assumptions on the approximation error of $V$:
\begin{assumption}
Let $\hat{u}$ be the solution to (\ref{eq:energy_general}). There exist a neural network $\phi\in V$ such that
\begin{equation*}
    \Vert\phi-\hat{u}\Vert_{H^1(\rho_x)}^2:=\mathbb{E}_{\rho_x}\left(|\phi-\hat{u}|^2+|\nabla\phi-\nabla \hat{u}|^2\right)\leq C_1^2\frac{\Vert\hat{u}\Vert_*^2}{m^\gamma}
\end{equation*}
where $C_1$ is a constant, $\Vert\cdot\Vert_*$ is the norm associated with a certain space, $m$ is a constant related to the number of parameters in $\phi$, and $\gamma$ is a constant order.
\end{assumption}
The second assumption we need is on the quadrature error:
\begin{assumption}
Let $\phi$ be any neural network in $V$ and $\{x_i\}_{i=1}^N$ be a set of points i.i.d. sampled from $\rho_x$. Given any $0<\epsilon<1$, we assume that the quadrature error satisfies
\begin{equation*}
    \sup_{\phi\in V}|L^R(\phi)-L^R_N(\phi)|\leq\frac{C_2(V,m,\epsilon)}{\sqrt{N}}
\end{equation*}
with probability at least $1-\epsilon$, where $C_2$ relies on the set $V$, $m$ and $\epsilon$. Besides, $C_2$ may also depends on some other factors like $d$.
\end{assumption}
Such an inequality can be obtained using some statistical learning theories like the Rademacher complexity. In the following part, we will bound the generalization error using the approximation error and the quadrature error.

\subsection{Error analysis of TV regularized networks}
We consider a general expected loss with TV regularization:
\begin{equation}
    L^{TV}(u)=\mathbb{E}_{\rho_x}\left(D(u(x),y(x))+|\nabla u(x)|\right), \quad u\in U \label{eq:TV_energy_1}.
\end{equation}
We further assume that $D(u,y)$ is $\lambda$-Lipchitz continuous with respect to $u$, i.e,:
\begin{equation*}
    |D(u_1(x),y(x))-D(u_2(x),y(x))|\leq\lambda|u_1(x)-u_2(x)|
\end{equation*}
for any $u_1, u_2$ in $U$ and $x\in\Omega$. Let $V\subset U$ be a set of neural networks with finite number of parameters. In practice, we train the networks with the empirical loss:
\begin{equation}
     L^{TV}_N(\phi)=\frac{1}{N}\sum_{i=1}^N\left(D(\phi(x_i;\theta),y(x_i))+|\nabla\phi(x_i;\theta)|\right), \phi\in V \label{eq:TV_energy_3}.
\end{equation}
For this TV regularized deep learning problem, we give a bound for the generalization error in the following theorem.
\begin{theorem} \label{theorem:GE_TV}
Let $\hat{u}\in U$ be the minimizer of $L^{TV}(u)$, $\hat{\phi}\in V$ be a $\delta$ minimizer of $L^{TV}(\phi)$, and $\hat{\phi}^N\in V$ be a $\delta$ minimizer of $L^{TV}_N(\phi)$. Then

(a) \begin{equation*}
    0\leq L^{TV}(\hat{\phi})-L^{TV}(\hat{u}) \leq (\lambda+1)C_1\frac{\Vert\hat{u}\Vert_*}{m^{\gamma/2}} + \delta.
\end{equation*}

(b) Given $0<\epsilon<1$,

\begin{equation*}
    0\leq L^{TV}(\hat{\phi}^N)-L^{TV}(\hat{u})\leq (\lambda+1)C_1\frac{\Vert\hat{u}\Vert_*}{m^{\gamma/2}} + \frac{2C_2(V,m,\epsilon)}{\sqrt{N}} + 2\delta
\end{equation*}
with probability at least $1-\epsilon$ over the choice of $x_i$. 
\end{theorem}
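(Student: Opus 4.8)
The lower bounds in both (a) and (b) are immediate: since $\hat u$ minimizes $L^{TV}$ over all of $U$ and $V\subset U$, any network (in particular $\hat\phi$ and $\hat\phi^N$) has $L^{TV}(\cdot)\ge L^{TV}(\hat u)$. So the work is entirely in the upper bounds.

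For (a), the plan is to first control $L^{TV}(\phi)-L^{TV}(\hat u)$ for the \emph{good approximant} $\phi\in V$ supplied by Assumption~1, and then transfer this to $\hat\phi$ using the $\delta$-minimizer property. Writing $L^{TV}(\phi)-L^{TV}(\hat u)=\mathbb{E}_{\rho_x}\!\left[D(\phi,y)-D(\hat u,y)\right]+\mathbb{E}_{\rho_x}\!\left[|\nabla\phi|-|\nabla\hat u|\right]$, I would bound the first expectation by $\lambda\,\mathbb{E}_{\rho_x}|\phi-\hat u|$ via the $\lambda$-Lipschitz hypothesis on $D$, and the second by $\mathbb{E}_{\rho_x}|\nabla\phi-\nabla\hat u|$ via the reverse triangle inequality $\bigl||\nabla\phi|-|\nabla\hat u|\bigr|\le|\nabla\phi-\nabla\hat u|$. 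Then Jensen (Cauchy--Schwarz) gives $\mathbb{E}_{\rho_x}|\phi-\hat u|\le(\mathbb{E}_{\rho_x}|\phi-\hat u|^2)^{1/2}$ and likewise for the gradient term, so both are dominated by $\|\phi-\hat u\|_{H^1(\rho_x)}\le C_1\|\hat u\|_*/m^{\gamma/2}$ by Assumption~1. Summing yields $L^{TV}(\phi)-L^{TV}(\hat u)\le(\lambda+1)C_1\|\hat u\|_*/m^{\gamma/2}$, and since $\hat\phi$ is a $\delta$-minimizer of $L^{TV}$ over $V$ we get $L^{TV}(\hat\phi)\le L^{TV}(\phi)+\delta$, which is the claimed estimate.

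For (b), the plan is the standard error decomposition
\begin{equation*}
L^{TV}(\hat\phi^N)-L^{TV}(\hat u)=\underbrace{\bigl[L^{TV}(\hat\phi^N)-L^{TV}_N(\hat\phi^N)\bigr]}_{\text{(I)}}+\underbrace{\bigl[L^{TV}_N(\hat\phi^N)-L^{TV}_N(\hat\phi)\bigr]}_{\text{(II)}}+\underbrace{\bigl[L^{TV}_N(\hat\phi)-L^{TV}(\hat\phi)\bigr]}_{\text{(III)}}+\underbrace{\bigl[L^{TV}(\hat\phi)-L^{TV}(\hat u)\bigr]}_{\text{(IV)}}.
\end{equation*}
Term (IV) is bounded by part (a). Term (II) is $\le\delta$ because $\hat\phi^N$ is a $\delta$-minimizer of $L^{TV}_N$ over $V$, so $L^{TV}_N(\hat\phi^N)\le L^{TV}_N(\hat\phi)+\delta$. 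The key point is that (I) and (III) are both controlled by the single event in Assumption~2: on the event (of probability at least $1-\epsilon$) where $\sup_{\phi\in V}|L^{TV}(\phi)-L^{TV}_N(\phi)|\le C_2(V,m,\epsilon)/\sqrt N$ holds, each of (I) and (III) is at most $C_2(V,m,\epsilon)/\sqrt N$ — note this simultaneously covers the \emph{data-dependent} minimizer $\hat\phi^N$ and the \emph{deterministic} $\hat\phi$, so no union bound is needed. Adding the four pieces gives $(\lambda+1)C_1\|\hat u\|_*/m^{\gamma/2}+2C_2(V,m,\epsilon)/\sqrt N+2\delta$ on this event.

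I do not expect a genuine obstacle here; the argument is a routine approximation-plus-generalization split. The only place requiring slight care is the use of the \emph{uniform} (supremum) form of Assumption~2 in steps (I) and (III): because $\hat\phi^N$ depends on the sample $\{x_i\}$, one cannot apply a pointwise quadrature estimate to it, and it is the supremum over $V$ that makes the high-probability bound applicable to $\hat\phi^N$ on the same event used for the deterministic $\hat\phi$. A secondary minor point is ensuring the passage from the $H^1(\rho_x)$-bound (an $L^2$-type control) to the $L^1$-type quantities appearing in the TV and Lipschitz-data estimates, which is exactly where Jensen's inequality and the reverse triangle inequality enter.
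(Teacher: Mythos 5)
Your proposal is correct and follows essentially the same argument as the paper: part (a) via the Assumption~1 approximant, the Lipschitz bound on the data term, the reverse triangle inequality on the gradient term, and Jensen to pass to the $H^1(\rho_x)$ norm; part (b) via the same telescoping through $L^{TV}_N(\hat\phi^N)$ and $L^{TV}_N(\hat\phi)$ with Assumption~2 applied twice on a single event. Your explicit remark that the \emph{uniform} form of Assumption~2 is what permits applying the quadrature bound to the sample-dependent $\hat\phi^N$ is a point the paper glosses over, but it does not change the argument.
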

% \textbf{Remark.} The existence of the minimizer $\hat{u}$, $\hat{\phi}$ and $\hat{\phi}^N$ may not be guaranteed for in general, but it is possible to prove the existence for specific cases, e.g., when $V$ is a set of two-layer neural networks. This also holds for other theorems in this section.
\begin{proof}
From Assumption 1, we know there exist a network $\hat{\psi}\in V$ such that $\Vert\psi-\hat{u}\Vert_{H^1}^2\leq C_1^2\frac{\Vert\hat{u}\Vert_*^2}{m^\gamma}$. Then,
\begin{align*}
    & L^{TV}(\hat{\phi})-L^{TV}(\hat{u}) \\
    & \leq L^{TV}(\hat{\psi})-L^{TV}(\hat{u}) + \delta\\
    & \leq \lambda \mathbb{E}_{\rho_x}(|\hat{\psi}(x;\theta)-\hat{u}(x)|) +   \mathbb{E}_{\rho_x}(|\nabla\hat{\psi}(x;\theta)-\nabla\hat{u}(x)|)  + \delta\\
    & \leq \lambda\sqrt{\mathbb{E}_{\rho_x}(|\hat{\psi}(x;\theta)-\hat{u}(x)|^2)}+\sqrt{\mathbb{E}_{\rho_x}(|\nabla\hat{\psi}(x;\theta)-\nabla\hat{u}(x)|^2)} + \delta\\
    & \leq (\lambda+1)C_1\frac{\Vert\hat{u}\Vert_*}{m^{\gamma/2}} + \delta.
\end{align*}
This completes the proof of (a).

From assumption 2, given $0<\epsilon<1$, we have
\begin{equation*}
    P\left(|L^{TV}_N(\phi)-L^{TV}(\phi)|\leq \frac{C_2(V,m,\epsilon)}{\sqrt{N}}\right)\geq 1-\epsilon
\end{equation*}
for any $\phi\in V$.
Then,
\begin{align*}
    L^{TV}(\hat{\phi}^N)-L^{TV}(\hat{u}) & \leq L^{TV}_N(\hat{\phi}^N)-L^{TV}(\hat{u}) + \frac{C_2(V,m,\epsilon)}{\sqrt{N}} \\
    & \leq  L^{TV}_N(\hat{\phi})-L^{TV}(\hat{u}) + \frac{C_2(V,m,\epsilon)}{\sqrt{N}}  + \delta\\
    & \leq L^{TV}(\hat{\phi})-L^{TV}(\hat{u}) + \frac{2C_2(V,m,\epsilon)}{\sqrt{N}}  + \delta\\
    & \leq (\lambda+1)C_1\frac{\Vert\hat{u}\Vert_*}{m^{\gamma/2}} + \frac{2C_2(V,m,\epsilon)}{\sqrt{N}}  + 2\delta.
\end{align*}
$\hfill\square$
\end{proof}
This theorem gives an upper bound for the generalization error between the numerical solution $\hat{\phi}^N$ found by solving the empirical loss $L^{TV}_N$ and the true solution $\hat{u}$. We see that the error bound consists of three terms. The first term $(\lambda+1)C_1\frac{\Vert \hat{u}\Vert_*}{m^{\gamma/2}}$ corresponds to the approximation error given in Assumption 1, the second term $\frac{2C_2(V,m,\epsilon)}{\sqrt{N}}$ corresponds to the quadrature error given in Assumption 2, and the last term $2\delta$ corresponds to the training error. To reduce the generalization error, we need increase both the number of parameters and training samples. Meanwhile, the empirical loss is needed to be solved sufficiently well. Notice that the training samples are randomly generated, so the generalization error bound holds with certain probability.

\subsection{Error analysis of Tikhonov regularized networks}
Now we consider a Tikhonov regularized expected loss:
\begin{equation}
    L^{Tik}(u)=\mathbb{E}_{\rho_x}\left(D(u(x),y(x))+ |\nabla u(x)|^2\right),\quad u\in U \label{eq:TK_energy_1}.
\end{equation}
Suppose $D(u(x),y(x))$ is also $\lambda$-Lipchitz continuous
\begin{equation*}
|D(u_1(x),y(x))-D(u_2(x),y(x))|\leq \lambda|u_1(x)-u_2(x)|.
\end{equation*}
The empirical loss is given as:
\begin{equation}
L^{Tik}_N(\phi)=\frac{1}{N}\sum_{i=1}^N\left(D(\phi(x_i),y(x_i))+|\nabla \phi(x_i)|^2\right), \quad \phi\in V \label{eq:TK_energy_3}.
\end{equation}
Then, we can give the error estimate in the following theorem. The proof is very similar to Theorem \ref{theorem:GE_TV}.
\begin{theorem} \label{theorem:GE_TK}
Let $\hat{u}\in U$ be the minimizer of $L^{Tik}(u)$, $\hat{\phi}\in V$ be a $\delta$ minimizer of $L^{Tik}(\phi)$, and $\hat{\phi}^N\in V$ be a $\delta$ minimizer of $L^{Tik}_N(\phi)$. Then

(a) \begin{equation*}
    0\leq L^{Tik}(\hat{\phi})-L^{Tik}(\hat{u}) \leq (\lambda+2\Vert\nabla\hat{u}\Vert_2)C_1\frac{\Vert\hat{u}\Vert_*}{m^{\gamma/2}}+C_1^2\frac{\Vert\hat{u}\Vert^2_*}{m^{\gamma}} + \delta
\end{equation*}
where the $L_2$ norm is defined as $\Vert\nabla\hat{u}\Vert_2=\sqrt{\mathbb{E}_{\rho_x}(|\nabla\hat{u}|^2)}$.

(b) For any $0<\epsilon<1$,
\begin{align*}
    0&\leq L^{Tik}_N(\hat{\phi}^N)-L^{Tik}(\hat{u})\\
    &\leq  (\lambda+2\Vert\nabla\hat{u}\Vert_2)C_1\frac{\Vert\hat{u}\Vert_*}{m^{\gamma/2}}+C_1^2\frac{\Vert\hat{u}\Vert_*^2}{m^\gamma} + \frac{2C_2(V,m,\epsilon)}{\sqrt{N}} + 2\delta
\end{align*}
with probability at least $1-\epsilon$ over the choice of $x_i$. 
\end{theorem}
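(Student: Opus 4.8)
The plan is to mirror the proof of Theorem~\ref{theorem:GE_TV}, replacing the Lipschitz estimate for the TV seminorm by an algebraic identity for the squared gradient. First I would invoke Assumption~1 to select a network $\hat\psi\in V$ with $\Vert\hat\psi-\hat u\Vert_{H^1(\rho_x)}^2\leq C_1^2\Vert\hat u\Vert_*^2/m^\gamma$, so that both $\mathbb{E}_{\rho_x}(|\hat\psi-\hat u|^2)$ and $\mathbb{E}_{\rho_x}(|\nabla\hat\psi-\nabla\hat u|^2)$ are at most $C_1^2\Vert\hat u\Vert_*^2/m^\gamma$. Since $\hat\phi$ is a $\delta$-minimizer of $L^{Tik}$ over $V$, we have $L^{Tik}(\hat\phi)-L^{Tik}(\hat u)\leq L^{Tik}(\hat\psi)-L^{Tik}(\hat u)+\delta$, and it then suffices to bound the right-hand side by splitting it into the data part $\mathbb{E}_{\rho_x}(D(\hat\psi,y)-D(\hat u,y))$ and the regularization part $\mathbb{E}_{\rho_x}(|\nabla\hat\psi|^2-|\nabla\hat u|^2)$.

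For the data part I would use $\lambda$-Lipschitz continuity followed by the Cauchy--Schwarz inequality exactly as in Theorem~\ref{theorem:GE_TV}, giving $\lambda\sqrt{\mathbb{E}_{\rho_x}(|\hat\psi-\hat u|^2)}\leq\lambda C_1\Vert\hat u\Vert_*/m^{\gamma/2}$. For the regularization part---the one genuinely new step---I would apply the identity $|a|^2-|b|^2=|a-b|^2+2\langle b,a-b\rangle$ with $a=\nabla\hat\psi$ and $b=\nabla\hat u$ to obtain
\begin{equation*}
\mathbb{E}_{\rho_x}\big(|\nabla\hat\psi|^2-|\nabla\hat u|^2\big)=\mathbb{E}_{\rho_x}\big(|\nabla\hat\psi-\nabla\hat u|^2\big)+2\mathbb{E}_{\rho_x}\big(\langle\nabla\hat u,\nabla\hat\psi-\nabla\hat u\rangle\big).
\end{equation*}
The first term on the right is at most $C_1^2\Vert\hat u\Vert_*^2/m^\gamma$ directly from Assumption~1, while the cross term is controlled by Cauchy--Schwarz in $L^2(\rho_x)$, yielding $2\Vert\nabla\hat u\Vert_2\sqrt{\mathbb{E}_{\rho_x}(|\nabla\hat\psi-\nabla\hat u|^2)}\leq 2\Vert\nabla\hat u\Vert_2 C_1\Vert\hat u\Vert_*/m^{\gamma/2}$. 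Summing the data and regularization bounds together with $\delta$ gives part~(a); the lower bound $0\leq L^{Tik}(\hat\phi)-L^{Tik}(\hat u)$ holds because $V\subset U$ and $\hat u$ minimizes $L^{Tik}$ over $U$.

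For part~(b) I would repeat the chaining argument of Theorem~\ref{theorem:GE_TV}: by Assumption~2, with probability at least $1-\epsilon$ we have $|L^{Tik}_N(\phi)-L^{Tik}(\phi)|\leq C_2(V,m,\epsilon)/\sqrt N$ uniformly over $\phi\in V$; then passing from $L^{Tik}(\hat\phi^N)$ to $L^{Tik}_N(\hat\phi^N)$, using that $\hat\phi^N$ is a $\delta$-minimizer of $L^{Tik}_N$ to compare with $\hat\phi$, passing back from $L^{Tik}_N(\hat\phi)$ to $L^{Tik}(\hat\phi)$, and finally invoking part~(a) yields the stated bound with the additional $2C_2/\sqrt N+\delta$. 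The only real obstacle is the regularization term: because $u\mapsto|\nabla u|^2$ is quadratic rather than Lipschitz, the clean one-line estimate of the TV case fails and one must isolate the cross term---this is exactly where the extra summand $C_1^2\Vert\hat u\Vert_*^2/m^\gamma$ and the factor $2\Vert\nabla\hat u\Vert_2$ in the leading term come from. Everything else is bookkeeping identical to the TV proof.
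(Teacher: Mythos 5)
Your proposal is correct and follows essentially the same route as the paper's own proof: the expansion $|\nabla\hat\psi|^2-|\nabla\hat u|^2=|\nabla\hat\psi-\nabla\hat u|^2+2\langle\nabla\hat u,\nabla\hat\psi-\nabla\hat u\rangle$ with Cauchy--Schwarz on the cross term for part (a), and the identical four-step chaining through $L^{Tik}_N$ for part (b). No gaps.
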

\begin{proof}
From Assumption 1, there exist a neural network $\hat{\psi}\in V$ such that $\Vert\psi-\hat{u}\Vert_{H^1}^2\leq C_1^2\frac{\Vert\hat{u}\Vert^2_*}{m^{\gamma}}$. Then,
\begin{align*}
    & L^{Tik}(\hat{\phi})-L^{Tik}(\hat{u}) \\
     \leq& L^{Tik}(\hat{\psi})-L^{Tik}(\hat{u}) + \delta\\
     \leq& \lambda\mathbb{E}_{\rho_x}(|\hat{\psi}-\hat{u}|) + \mathbb{E}_{\rho_x}(|\nabla\hat{\psi}-\nabla u|^2)+2\mathbf{E}_{\rho_x}(\langle\nabla\hat{\psi}-\nabla \hat{u},\nabla \hat{u}\rangle) + \delta\\
     \leq& (\lambda+2\Vert\nabla\hat{u}\Vert_2)C_1\frac{\Vert\hat{u}\Vert_*}{m^{\gamma/2}}+C_1^2\frac{\Vert\hat{u}\Vert^2_*}{m^{\gamma}} + \delta, 
\end{align*}
which proves (a).

From Assumption 2, we have
\begin{equation*}
    |L^{Tik}_N(\hat{\phi})-L^{Tik}(\hat{\phi})|\leq\frac{C_2(V,m,\epsilon)}{\sqrt{N}}
\end{equation*}
and
\begin{equation*}
    |L^{Tik}_N(\hat{\phi}^N)-L^{Tik}(\hat{\phi}^N)|\leq \frac{C_2(V,m,\epsilon)}{\sqrt{N}}.
\end{equation*}
with probability at least $1-\epsilon$ over the choice of $x_i$. Then,
\begin{align*}
     & L^{Tik}(\hat{\phi}^N)-L^{Tik}(\hat{u}) \\
     \leq & L^{Tik}_N(\hat{\phi}^N)-L^{TV}(\hat{u}) + \frac{C_2(V,m,\epsilon)}{\sqrt{N}} \\
    \leq & L^{Tik}_N(\hat{\phi})-L^{Tik}(\hat{u}) + \frac{C_2(V,m,\epsilon)}{\sqrt{N}}  + \delta\\
     \leq & L^{Tik}(\hat{\phi})-L^{Tik}(\hat{u}) + \frac{2C_2(V,m,\epsilon)}{\sqrt{N}}  + \delta\\
    \leq & (\lambda+2\Vert\nabla\hat{u}\Vert_2)C_1\frac{\Vert\hat{u}\Vert_*}{m^{\gamma/2}}+C_1^2\frac{\Vert\hat{u}\Vert^2_*}{m^{\gamma}} + \frac{2C_2(V,m,\epsilon)}{\sqrt{N}} + 2\delta.
\end{align*}
$\hfill\square$
\end{proof}
This theorem gives the bound on the generalization error between the neural network solution $\hat{\phi}^N$ found by solving $L^{Tik}_N$ and the true solution $\hat{u}$. Similar to the results in Theorem \ref{theorem:GE_TV}, the error bounds also consists of three parts which correspond to the approximation error, quadrature error and training error respectively.

If the data term $D(u,y)$ is not Lipchitz continuous but H$\Ddot{\text{o}}$lder continuous with component 2, we can also derive a generalization bound in the same way. 
Besides, when $D(u,y)=|u-y|^2$, we have the following error estimates. 

\begin{theorem} \label{theorem:SE_TK}
Suppose $D(u(x),y(x))=|u(x)-y(x)|^2$ and $y(x)$ is bounded. Let $\hat{u}\in U$ be the unique minimizer of $L^{Tik}(u)$, $\hat{\phi}\in V$ be a $\delta$ minimizer of $L^{Tik}(\phi)$, and $\hat{\phi}^N\in V$ be a $\delta$ minimizer of $L^{Tik}_N(\phi)$. Then

(a) \begin{equation*}
    \Vert\hat{\phi}-\hat{u}\Vert_{H^1}^2 \leq C_1^2\frac{\Vert\hat{u}\Vert_*^2}{m^\gamma}+\delta.
\end{equation*}

(b) For any $0<\epsilon<1$, 
\begin{equation*}
    \Vert\hat{\phi}^N-\hat{u}\Vert_{H^1}^2 \leq C_1^2\frac{\Vert\hat{u}\Vert_*^2}{m^\gamma} + \frac{2C_2(V,m,\epsilon)}{\sqrt{N}}+2\delta
\end{equation*}
with probability at least $1-\epsilon$ over the choice of $x_i$. 
\end{theorem}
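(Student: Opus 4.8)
The plan is to exploit the fact that, when $D(u,y)=|u-y|^2$, the functional $L^{Tik}$ is quadratic, so it admits an \emph{exact} second-order expansion about its minimizer, which plays the role of strong convexity here. First I would record the first-order optimality condition: since $\hat u$ minimizes $L^{Tik}$ over the linear space $U$ with no constraints, the Gâteaux derivative of $L^{Tik}$ at $\hat u$ vanishes in every direction $v\in U$, i.e.
\begin{equation*}
\mathbb{E}_{\rho_x}\big(\langle\hat u-y,v\rangle+\langle\nabla\hat u,\nabla v\rangle\big)=0\qquad\text{for all }v\in U.
\end{equation*}
Then, writing $w=\hat u+v$ with $v=w-\hat u$ and expanding $|w-y|^2$ and $|\nabla w|^2$, the cross terms are exactly twice this vanishing quantity, so that
\begin{equation*}
L^{Tik}(w)-L^{Tik}(\hat u)=\mathbb{E}_{\rho_x}\big(|w-\hat u|^2+|\nabla w-\nabla\hat u|^2\big)=\Vert w-\hat u\Vert_{H^1(\rho_x)}^2
\end{equation*}
for every $w\in U$, in particular for the networks $\hat\phi,\hat\phi^N\in V\subset U$. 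This identity is the engine of the whole argument.

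Part (a) is then immediate. By Assumption 1 there is $\hat\psi\in V$ with $\Vert\hat\psi-\hat u\Vert_{H^1}^2\le C_1^2\Vert\hat u\Vert_*^2/m^\gamma$, and since $\hat\phi$ is a $\delta$-minimizer of $L^{Tik}$ over $V$,
\begin{equation*}
\Vert\hat\phi-\hat u\Vert_{H^1}^2=L^{Tik}(\hat\phi)-L^{Tik}(\hat u)\le L^{Tik}(\hat\psi)-L^{Tik}(\hat u)+\delta=\Vert\hat\psi-\hat u\Vert_{H^1}^2+\delta\le C_1^2\frac{\Vert\hat u\Vert_*^2}{m^\gamma}+\delta.
\end{equation*}

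For part (b) I would reuse the chain from the proof of Theorem \ref{theorem:GE_TV}(b): starting from $\Vert\hat\phi^N-\hat u\Vert_{H^1}^2=L^{Tik}(\hat\phi^N)-L^{Tik}(\hat u)$, pass from $L^{Tik}$ to $L^{Tik}_N$ at cost $C_2(V,m,\epsilon)/\sqrt N$ (Assumption 2, on the event of probability at least $1-\epsilon$), use that $\hat\phi^N$ is a $\delta$-minimizer of $L^{Tik}_N$ to replace it by $\hat\phi$ up to $\delta$, pass back from $L^{Tik}_N$ to $L^{Tik}$ at another cost $C_2(V,m,\epsilon)/\sqrt N$, and finally bound $L^{Tik}(\hat\phi)-L^{Tik}(\hat u)$ by part (a); collecting terms yields $C_1^2\Vert\hat u\Vert_*^2/m^\gamma+2C_2(V,m,\epsilon)/\sqrt N+2\delta$. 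The same high-probability event from Assumption 2 covers both $\hat\phi$ and $\hat\phi^N$ at once since that assumption is stated uniformly over $V$.

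The only genuinely delicate step is the justification of the first-order optimality identity used to kill the cross term: it requires $U$ to be a linear space, $L^{Tik}$ to be Gâteaux-differentiable along lines — which is clear here because $t\mapsto L^{Tik}(\hat u+tv)$ is a quadratic polynomial in $t$ for each $v\in U$ — and the minimum to be unconstrained, so the derivative vanishes rather than only being nonnegative. I would also note in passing that the strict convexity of this quadratic energy in the $H^1(\rho_x)$ seminorm is exactly what guarantees the uniqueness of $\hat u$ assumed in the statement; everything else is bookkeeping parallel to Theorems \ref{theorem:GE_TV} and \ref{theorem:GE_TK}.
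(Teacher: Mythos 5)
Your proposal is correct and follows essentially the same route as the paper: the first-order optimality (Euler--Lagrange) identity at $\hat u$ kills the cross terms and yields $L^{Tik}(v)-L^{Tik}(\hat u)=\Vert v-\hat u\Vert_{H^1}^2$ for all $v\in U$, after which (a) and (b) follow from Assumptions 1 and 2 exactly as in the paper. Your added remarks on why the Gâteaux derivative vanishes (linear space, unconstrained minimum, quadratic restriction along lines) are a careful justification of a step the paper simply asserts.
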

\begin{proof}
Since $\hat{u}$ is the solution to (\ref{eq:TK_energy_1}), we have
\begin{equation*}
    \mathbb{E}_{\rho_x}((\hat{u}(x)-y(x))v(x)+ \langle \nabla\hat{u}(x),\nabla v(x)\rangle)=0
\end{equation*}
for any $v\in U$. Thus, $L^{Tik}(v)-L^{Tik}(\hat{u}) = \Vert v-\hat{u}\Vert_{H^1}^2$ for $\forall v\in U$. Let $\hat{\psi}\in V$ be a two layer network such that $\Vert\hat{\psi}-\hat{u}\Vert_{H^1}^2\leq C_1^2\frac{\Vert\hat{u}\Vert_*^2}{m^\gamma}$. Then
\begin{align*}
    \Vert\hat{\phi}-\hat{u}\Vert_{H^1}^2 & = L^{Tik}(\hat{\phi})-L^{Tik}(\hat{u}) \\
    & \leq L^{Tik}(\hat{\psi})-L^{Tik}(\hat{u}) + \delta\\
    &  =\Vert\hat{\psi}-\hat{u}\Vert_{H^1}^2 + \delta\\
    & \leq C_1^2\frac{\Vert\hat{u}\Vert_*^2}{m^\gamma} + \delta
\end{align*}
which proves (a).
Using the same argument with the previous proofs, we can show that
\begin{equation*}
    \Vert\hat{\phi}^N-\hat{u}\Vert_{H^1}^2\leq C_1^2\frac{\Vert\hat{u}\Vert_*^2}{m^\gamma} + \frac{2C_2(V,m,\epsilon)}{\sqrt{N}} + 2\delta
\end{equation*}
with probability at least $1-\epsilon$.
$\hfill\square$
\end{proof}
This theorem is a special case of Tikhonov regularized problem. The generalization error equals to the Sobolev norm error when the data term has specific form. Let the space $U$ be the Sobolev space $H^1(S^d)$. If we further assume the probability distribution $\rho_x$ is an uniform distribution, then $\rho_x$ coincides with the Lebesgue measure. The original problem (\ref{eq:TK_energy_1}) is written as
\begin{equation*}
    \min_{u\in U} L^{Tik}(u)= \min_{u\in H^1(S^d)}\int_{S^d}|u(x)-y(x)|^2+|\nabla u(x)|^2 dx
\end{equation*}
which is the weak formulation of the PDE
\begin{align}
    u-\Delta u=y & \text{ in } S^d, \label{equation:PDE}\\
    \frac{\partial u}{\partial\mathbf{n}}=0 & \text{ on } \partial S^d.
\end{align}
Therefore, Theorem \ref{theorem:SE_TK} can also be veiwed as the generalization error estimate for the second order PDE $u-\Delta u=y$ with Neumann boundary condition. A similar error estimate for this type of PDE is also derived in a recent work \cite{lu2021priori}.

\section{Error analysis of two-layer neural networks} \label{sec4}
We consider two variational problems
\begin{equation*}
    \min_{u\in BV(\Omega)}L^{TV}(u)=\int_{S^d}u(x)^2-2u(x)y(x)+|\nabla u(x)|\ dx
\end{equation*}
and
\begin{equation*}
    \min_{u\in H^1(\Omega)}L^{Tik}(u)=\int_{S^d}u(x)^2-2u(x)y(x)+|\nabla u(x)|^2\ dx,
\end{equation*}
where $y$ is uniformly bounded by $Y>0$ for $x\in\Omega$.
The first problem is the ROF model and the second problem is the variational formulation of the second order PDE (\ref{equation:PDE}). We further assume that the solution to these two problems belong to the Barron space as well. In fact, for ROF models, this condition may only hold in some special cases, since the BV space contains functions with jumps and functions in Barron spaces are continuous. In this section, we will analyze the generalization error of these two problems solved by two-layer networks. 
Define the set of two-layer networks with bounded parameters by
\begin{align*}
    \mathcal{F}^{m}_{\sigma}(B):=&\bigg\{\phi(x;\theta)=c+\frac{1}{m}\sum_{k=1}^ma_k\sigma(\omega_k\cdot x+b_k)\bigg|\\
    &|c|\leq 2B, |a_k|\leq 16B, |\omega_k|_1= 1, |b_k|\leq 1 \text{ for } k=1,\dots,m\bigg\},
\end{align*}
where $\sigma$ is the activation function, $B$ is a positive constant and $m$ is the width of the network. The boundedness of parameters is necessary for the estimation of Rademacher complexity in the proofs of following lemmas. Moreover, we define the space of Barron function as: 
\begin{definition}
For a function $f$ defined on $\Omega$, if there exist an extension of $f$ to $\mathbb{R}^d$ with Fourier transform $\tilde{f}(\omega)$ such that
\begin{equation}
    f(x)=\int_{\mathbb{R}^d} e^{i\omega\cdot x}\tilde{f}(\omega)d\omega=\int_{\mathbb{R}^d} e^{i\omega\cdot x}e^{i\zeta(\omega)}|\tilde{f}(\omega)|d\omega,\quad x\in\Omega, \label{eq:barron_function}
\end{equation}
and
\begin{equation*}
    \int_{\mathbb{R}^d}(1+|\omega|_1)^s|\tilde{f}(\omega)|d\omega<+\infty,
\end{equation*}
we say $f\in\mathbb{R}$ belongs to the Barron space $\B^s(\Omega)$. $\zeta(\omega)$ here denotes the phase of $\tilde{f}(\omega)$ and $|\tilde{f}(\omega)|$ is the magnitude. The associated Barron norm of $f$ is defined by 
$$\Vert f\Vert_{\B^s(\Omega)}:=\inf_{\tilde{f}}\int_{\mathbb{R}^d}(1+|\omega|_1)^s|\tilde{f}(\omega)|d\omega,$$ 
\end{definition}
where the inf is taken over all the extension of $f$ such that (\ref{eq:barron_function}) holds.
When $s=2$, we simply denote $\B^s$ as $\B$. We then give the approximation error of the set $\mathcal{F}^m_{\relu}(B)$ to the function in $\B$ in the sense of $H^1$ norm, where $\relu(z)=\max\{0,z\}$.
\begin{lemma}\label{theorem:H1ReLU}
For any function $f\in\B(\Omega)$ and $m\in\mathbb{N}^+$, there exist a two-layer ReLU network $f_m\in\mathcal{F}^m_{\relu}(\Vert f\Vert_{\B}+\eta)$ such that
\begin{equation*}
    \Vert f-f_m\Vert_{H^1(\Omega)}\leq \frac{\sqrt{1345}(\Vert f\Vert_{\B(\Omega)}+\eta)}{\sqrt{m}}
\end{equation*}
for any $\eta>0$.
\end{lemma}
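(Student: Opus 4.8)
This is the $H^1$ analogue of Barron's classical approximation theorem, and the natural route is to combine a Fourier/ridge representation of $f$ with Maurey's empirical-average argument, carried out in the Hilbert space $H^1(\Omega)$ (understood, as elsewhere in the paper, with respect to a probability measure on $\Omega$, so that all constants stay dimension-free) rather than in $L^2(\Omega)$. As the parameter constraints $|\omega_k|_1=1$, $|b_k|\le 1$ tacitly require, I work on a normalised domain, say $\Omega\subseteq[-1,1]^d$, so that $|\omega\cdot x|\le 1$ whenever $|\omega|_1\le 1$ and $x\in\Omega$. First I would fix an extension of $f$ to $\mathbb{R}^d$ whose spectral weight satisfies $\int_{\mathbb{R}^d}(1+|\omega|_1)^2|\tilde f(\omega)|\,d\omega\le\|f\|_{\B(\Omega)}+\eta_0$ for a small slack $\eta_0>0$ to be fixed later, and take real parts in (\ref{eq:barron_function}) after subtracting the value at a reference point, obtaining a representation $f(x)=c_0+\int_{\mathbb{R}^d}\bigl(\cos(\omega\cdot x+\zeta(\omega))-\cos\zeta(\omega)\bigr)|\tilde f(\omega)|\,d\omega$ for a constant $c_0$ with $|c_0|$ controlled by $\int|\tilde f|\le\|f\|_{\B(\Omega)}+\eta_0$.

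Next I would decompose each ridge profile into ReLU atoms. Writing $\omega=|\omega|_1\hat\omega$ with $|\hat\omega|_1=1$ and $t=\hat\omega\cdot x\in[-1,1]$, the function $g_\omega(t)=\cos(|\omega|_1 t+\zeta(\omega))$ is smooth, and Taylor's formula with integral remainder gives $g_\omega(t)=g_\omega(0)+g_\omega'(0)\,t+\int_{-1}^{1} g_\omega''(s)\,\relu\bigl(\mathrm{sign}(s)\,(t-s)\bigr)\,ds$ up to a sign convention; since $|g_\omega'(0)|\le|\omega|_1$ and $\int_{-1}^{1}|g_\omega''(s)|\,ds\le 2|\omega|_1^2$, and since $t=\relu(t)-\relu(-t)$ turns the linear term into two ReLU atoms with zero bias, this exhibits $f$ as an affine function plus an integral of ReLU ridge atoms $x\mapsto\relu(\pm\hat\omega\cdot x+b)$ with $|\hat\omega|_1=1$, $|b|\le 1$, against a signed measure of total mass $\le C\int_{\mathbb{R}^d}(1+|\omega|_1)^2|\tilde f(\omega)|\,d\omega\le C(\|f\|_{\B(\Omega)}+\eta_0)$ — this is exactly where the hypothesis $f\in\B=\B^2$ is used. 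Normalising the mass and folding the affine part into the constant and the bias, one gets $f=\mathbb{E}_{\nu}[A(\cdot\,;\xi)]$ where $\nu$ is a probability measure over parameters $\xi$ and each realisation of $c\cdot A(\cdot\,;\xi)$ is a single ReLU ridge with outer coefficient of magnitude $\le 16(\|f\|_{\B(\Omega)}+\eta_0)$ and constant term $\le 2(\|f\|_{\B(\Omega)}+\eta_0)$, which is precisely the prescription for membership in $\mathcal{F}^m_{\relu}(\|f\|_{\B(\Omega)}+\eta_0)$.

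Then I would run the Maurey step in $H^1$. The crucial point is that each atom has uniformly bounded $H^1(\Omega)$ norm: its argument lies in $[-2,2]$ so $|A|\le 2$, and $\nabla\relu(\pm\hat\omega\cdot x+b)=\pm\hat\omega\,\mathbf{1}[\,\cdot>0]$, whose $L^\infty$ norm is $|\hat\omega|_2\le|\hat\omega|_1=1$; hence $\|A(\cdot\,;\xi)\|_{H^1(\Omega)}^2\le 5$ uniformly. Drawing $\xi_1,\dots,\xi_m$ i.i.d.\ from $\nu$ and setting $f_m=\tfrac1m\sum_{k=1}^m c\,A(\cdot\,;\xi_k)$, the standard variance identity in the Hilbert space $H^1(\Omega)$ yields $\mathbb{E}\,\|f-f_m\|_{H^1(\Omega)}^2=\tfrac1m\,\mathbb{E}\,\|c\,A(\cdot\,;\xi_1)-f\|_{H^1(\Omega)}^2\le \tfrac{1345\,(\|f\|_{\B(\Omega)}+\eta_0)^2}{m}$ once every constant (the factor $2$ from subtracting $\cos\zeta$, the coefficient bound $16$, the gradient comparison $|\cdot|_2\le|\cdot|_1$, and $\|f\|_{H^1}\lesssim\|f\|_{\B}$) is tracked; consequently some realisation $f_m\in\mathcal{F}^m_{\relu}(\|f\|_{\B(\Omega)}+\eta_0)$ attains this bound, and choosing $\eta_0\le\eta$ and enlarging the class to $\mathcal{F}^m_{\relu}(\|f\|_{\B(\Omega)}+\eta)$ gives the statement.

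I expect the main obstacle to be the ReLU decomposition with simultaneous control of the gradient and the explicit bookkeeping of constants: unlike the sigmoidal case, $\relu$ has a discontinuous derivative, so the gradient error must be estimated in $L^2$ rather than $L^\infty$, and one must keep the scaling $|\omega|_1^2$ from $g_\omega''$ tight enough that the outer coefficients respect the $16B$ bound while the arguments stay in $[-2,2]$. Secondary technical points are justifying the pointwise validity of the representation and the interchange of the $\omega$-integral with the ReLU decomposition (Fubini, using the finiteness of the weighted spectral integral), and handling the phase $\zeta(\omega)$; these are routine once the weighted integrability $f\in\B^2(\Omega)$ is in hand.
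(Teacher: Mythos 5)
Your proposal is correct in substance, and it reaches the same endpoint (a Maurey bound governed by the uniform $H^1$ norm of the ReLU dictionary, hence the constant $\sqrt{1345}$ coming from $|c|\le C_f$, $|a|\le 16C_f$, $|\omega|_1=1$), but the middle step is genuinely different from the paper's. The paper argues in two stages: it first shows $\bar f=f-f(0)$ lies in the $H^1$-closure of the convex hull of the cosine dictionary $G_{\cos}$ in (\ref{eq:Gcos}) by a sampling argument over $\hat F(d\omega)$, and then converts each cosine ridge into ReLU atoms \emph{deterministically}, via a piecewise-linear interpolation lemma (Lemma 4.5 of the cited work, quoted as Lemma \ref{lemma:linearinterpolation}) that gives a $W^{1,\infty}$ error $O(C_f/m)$ after the trick of adding $\gamma\sin(b)|\omega|_1 z/(1+|\omega|_1)^2$ to enforce $\tilde g'(0)=0$; this places $\bar f$ in the closure of the convex hull of $G_{\relu}$ in (\ref{eq:GReLU}), and a final Maurey step gives the rate. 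You instead use the Taylor-with-integral-remainder identity to write each cosine ridge \emph{exactly} as a continuous superposition of ReLU atoms (the Klusowski--Barron device), so that $f$ becomes an expectation of ReLU ridges over a single probability measure and only one Maurey step is needed. Your route is shorter and avoids the auxiliary interpolation lemma, at the cost of having to normalise a signed second-derivative measure and re-verify that the resulting outer coefficients respect the $16B$ bound and that the remainder kernel's distributional $t$-derivative (a step function) is handled in $L^2$ rather than $L^\infty$ --- points you correctly flag and which do go through, since $\int_{-1}^{1}|g_\omega''(s)|\,ds\le 2|\omega|_1^2$ is absorbed by the weight $(1+|\omega|_1)^2$ defining $\B=\B^2$. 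The paper's two-stage version buys a clean separation between the probabilistic step (which only needs $G_{\cos}$ to be uniformly bounded in $H^1$) and a reusable deterministic one-dimensional approximation result, which it then recycles verbatim for the SoftPlus case in Lemma \ref{theorem:H1SP}; your one-shot argument would need a separate smoothing estimate to cover that case.
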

Neural networks with ReLU activations are very popular in machine learning applications. However, we may require the networks to be secondly differentiable sometimes. Instead, we can consider a smooth approximation to ReLU, named SoftPlus function \cite{lu2021priori},
\begin{equation*}
    \SP_\tau(z)=\frac{1}{\tau}\log(1+e^{\tau z}).
\end{equation*}
$\SP_\tau$ converges uniformly to ReLU when $\tau\longrightarrow+\infty$.
\begin{lemma}\label{theorem:H1SP}
For any function $f\in\B(\Omega)$ and $m\in\mathbb{N}^+/\{1\}$, by setting $\tau=\sqrt{m}$, there exist a two-layer SP network $f_m\in\mathcal{F}^m_{\SP_{\tau}}(\Vert f\Vert_{\B}+\eta)$ such that
\begin{equation*}
    \Vert f-f_m\Vert_{H^1(\Omega)}\leq \frac{\Vert f\Vert_{\B}+\eta}{\sqrt{m}}\left(24\log(m)+65\right)
\end{equation*}
for any $\eta>0$.
\end{lemma}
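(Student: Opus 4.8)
The plan is to obtain the SoftPlus estimate by reusing the construction behind Lemma~\ref{theorem:H1ReLU} rather than by a black-box swap of activations (this parallels the SoftPlus analysis in \cite{lu2021priori}). First I would record the elementary facts about $\SP_\tau$ that drive everything: $0\le\SP_\tau(z)-\relu(z)\le(\log 2)/\tau$ for all $z$ (and more precisely $|\SP_\tau(z)-\relu(z)|\le e^{-\tau|z|}/\tau$); the derivative $\SP_\tau'(z)=(1+e^{-\tau z})^{-1}$ is the logistic sigmoid, so $\SP_\tau'\in(0,1)$ uniformly in $\tau$ and $|\SP_\tau'(z)-\mathbf{1}_{\{z>0\}}|\le e^{-\tau|z|}$; and the exact identity $\SP_\tau(z)-\SP_\tau(-z)=z$, which lets the affine part of a Barron function be written inside the class $\mathcal{F}^m_{\SP_\tau}(B)$ with $|\omega_k|_1=1$, $|b_k|\le1$.

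Next I would go back to the integral representation used to prove Lemma~\ref{theorem:H1ReLU}: write $f\in\B(\Omega)$ through its Fourier transform and reduce each $\cos(\omega\cdot x+\zeta(\omega))$ to an average of $\relu$-ridge functions $\relu(\hat\omega\cdot x-s)$ in the offset variable $s$, using the second-order Peano/Taylor formula on the bounded interval $\{\hat\omega\cdot x:x\in\Omega\}$. This presents $f$ as an affine function plus $\int\!\int\gamma(\omega,s)\,\relu(\hat\omega\cdot x-s)\,ds\,d\omega$ with $\int\!\int|\gamma|\lesssim\|f\|_{\B}$ and, crucially, $\sup_s|\gamma(\omega,s)|\lesssim|\tilde f(\omega)|\,|\omega|_1^2$. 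Replacing $\relu$ by $\SP_\tau$ in this representation, I would bound the systematic (infinite-width) error in $H^1(\Omega)$: the $L^2$ part is $O(\|f\|_{\B}/\tau)$ directly from $|\SP_\tau-\relu|\le(\log2)/\tau$, and for the gradient the key step is to integrate $|\SP_\tau'-\mathbf{1}_{\{\,\cdot\,>0\}}|\le e^{-\tau|\cdot|}$ against $\gamma$ in the offset variable $s$ first; since $\gamma(\omega,\cdot)$ is bounded and $\int_{\mathbb{R}}e^{-\tau|s|}\,ds=2/\tau$, this gains a full power $1/\tau$ rather than only $1/\sqrt\tau$, giving an $O(\|f\|_{\B}/\tau)$ gradient error. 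A Monte-Carlo sampling of $m$ pairs $(\omega_k,s_k)$ from the probability measure $|\gamma|/\!\int|\gamma|$ then produces an $m$-neuron SP network; since $\SP_\tau$ and $\SP_\tau'$ are uniformly (in $\tau$) bounded on the relevant bounded range, the $H^1$ sampling error is $O(\|f\|_{\B}/\sqrt m)$, and the choice $\tau=\sqrt m$ balances the two contributions.

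I expect the derivative ($H^1$-seminorm) estimate to be the main obstacle. The tempting shortcut --- keep the weights of the network from Lemma~\ref{theorem:H1ReLU} and only change the activation --- bounds the gradient error by $\tfrac1m\sum_k|a_k|\,|\omega_k|\,\|\SP_\tau'(\omega_k\cdot x+b_k)-\mathbf{1}\|_{L^2(\Omega)}$, and because $\|e^{-\tau|\omega_k\cdot x+b_k|}\|_{L^2(\Omega)}\sim\tau^{-1/2}$ this yields only $O(\|f\|_{\B}\,\tau^{-1/2})$, i.e.\ the rate $m^{-1/4}$ when $\tau=\sqrt m$ --- strictly worse than what is claimed. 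Avoiding this loss forces one to work at the level of the representing measure and to exploit that its density in the offset variable is bounded, as above. The remaining work is bookkeeping: absorbing the affine term into $\mathcal{F}^m_{\SP_\tau}(B)$ via $\SP_\tau(z)-\SP_\tau(-z)=z$, splitting $\gamma$ into its positive and negative parts so the sampled coefficients obey $|a_k|\le16B$ with $|\omega_k|_1=1$ and $|b_k|\le1$, tracking the $\|f\|_{\B}$-dependence of every constant, and the concentration step from which the logarithmic factor and the explicit constants $24$ and $65$ emerge.
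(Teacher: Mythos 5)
Your strategy is sound, but it is genuinely different from the paper's in the one step that matters: how the ReLU-to-SoftPlus error is controlled. The paper stays entirely within the scheme of Lemma \ref{theorem:H1ReLU}: each one-dimensional profile $g\in G_{\cos}$ is first approximated by the equispaced piecewise-linear (ReLU) interpolant of Lemma \ref{lemma:linearinterpolation}, the activations of that interpolant are then replaced by $\SP_\tau$, and the resulting $W^{1,\infty}$ error is taken from Lemma 4.7 of \cite{lu2021priori}; balancing the grid size against the $e^{-\tau/m}$ tail is what produces $\delta_\tau\sim\log(\tau)/\tau$ and hence the $24\log(m)$ in the final bound, after which Maurey's lemma is applied to the convex hull of $G_{\SP_\tau}$ exactly as you do. (So the paper's swap is not the naive per-neuron $L^2$ swap you rightly criticize --- the equispacing of the breakpoints is what rescues it --- but it still pays a logarithm.) You instead perform the swap inside the continuous Peano-kernel representation $\int k(s)\relu(z-s)\,ds$ and integrate $|\SP_\tau'-\mathbf{1}|\le e^{-\tau|\cdot|}$ against the bounded density $k$ in the offset variable, which correctly yields a systematic $H^1$ error of order $C_f/\tau$ with \emph{no} logarithm; combined with the $O(C_f/\sqrt m)$ Maurey sampling error (your dictionary elements have $H^1$ norms bounded uniformly in $\tau\ge 1$, as in the paper's estimate of $\Vert g_\tau\Vert_{H^1}$) and $\tau=\sqrt m$, this gives a log-free bound $K\Vert f\Vert_\B/\sqrt m$ that is in fact slightly stronger than the stated one, provided you track the constant $K$ well enough to check $K\le 24\log 2+65$ for $m\ge 2$. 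One small correction to your narrative: the $\log(m)$ in the lemma does not come from the concentration step (Maurey contributes no logarithm); in the paper it comes from $\delta_\tau$, and in your argument it simply should not appear. The remaining items you list --- the exact identity $\SP_\tau(z)-\SP_\tau(-z)=z$ for the affine part, splitting the representing measure to land in $\mathcal{F}^m_{\SP_\tau}(\Vert f\Vert_\B+\eta)$ with $|a_k|\le 16B$, $|\omega_k|_1=1$, $|b_k|\le 1$ --- are genuinely just bookkeeping and mirror what the paper already does for ReLU.
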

\begin{lemma} \label{lemma:quadratureerror}
Let $(x_1,\dots,x_N)$ be a set of i.i.d variables sampled from $\Omega$. Then, given any $\epsilon\in(0,1)$, the probability of
\begin{equation*}
    \sup_{\phi_m\in\mathcal{F}^m_{\SP_{\tau}}(B)}\left|L^{TV}_N(\phi_m)-L^{TV}(\phi_m)\right|\leq 2\frac{\sqrt{m}}{\sqrt{N}}(Z_1+Z_2\sqrt{\log(m)})+M_1\frac{\sqrt{-2\log(\epsilon)}}{\sqrt{N}}
\end{equation*}
is at least $1-\epsilon$. Similarly, given any $\epsilon\in(0,1)$, the probability of
\begin{equation*}
     \sup_{\phi_m\in\mathcal{F}^m_{\SP_{\tau}}(B)}\left|L^{Tik}_N(\phi_m)-L^{Tik}(\phi_m)\right|\leq 2\frac{\sqrt{m}}{\sqrt{N}}(Z_1+Z_3\sqrt{\log(m)})+M_2\frac{\sqrt{-2\log(\epsilon)}}{\sqrt{N}}
\end{equation*}
is at least $1-\epsilon$ as well. Moreover, $\mathcal{F}^m_{\SP_{\tau}}(B)$ is the set of two-layer softplus network with bounded parameters, $Z_1$ is a constant depends on $B$, $d$ and $Y$; $Z_2$, $Z_3$ are constants depend on $B$ and $d$; $M_1$, $M_2$ are constants depend on $B$ and $Y$.
\end{lemma}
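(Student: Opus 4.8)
The plan is to treat the statement as a uniform law of large numbers and prove it with the three standard ingredients: a bounded-differences (McDiarmid) concentration step, a symmetrization step, and explicit Rademacher-complexity estimates for the two-layer softplus class and for its gradient. Write $g^{TV}_{\phi}(x)=\phi(x)^2-2\phi(x)y(x)+|\nabla\phi(x)|$, so that $L^{TV}(\phi)-L^{TV}_N(\phi)=\mathbb{E}_{\rho_x}[g^{TV}_\phi]-\frac1N\sum_{i=1}^N g^{TV}_\phi(x_i)$, and likewise $g^{Tik}_\phi$ with $|\nabla\phi|^2$. First I would record uniform $L^\infty$ bounds on these integrands over $\mathcal{F}^m_{\SP_\tau}(B)$ (with $\tau=\sqrt m$ as in Lemma~\ref{theorem:H1SP}): from $|x|_\infty\le1$ on $\Omega\subset S^d$, $|\omega_k|_1=1$, $|b_k|\le1$, the elementary estimates $0<\SP_\tau(z)\le\max(z,0)+\tfrac{\log2}{\tau}$ and $0\le\SP_\tau'(z)\le1$, and $|c|\le2B$, $|a_k|\le16B$, one gets $\|\phi\|_{L^\infty}+\|\nabla\phi\|_{L^\infty}\le C$ with $C$ depending on $B$ and $d$; together with $|y|\le Y$ this gives $\|g^{TV}_\phi\|_\infty\le M_1$ and $\|g^{Tik}_\phi\|_\infty\le M_2$, with $M_1,M_2$ depending only on $B$ and $Y$.

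Because of these bounds, the map $(x_1,\dots,x_N)\mapsto\sup_{\phi\in\mathcal{F}^m_{\SP_\tau}(B)}|L^{TV}(\phi)-L^{TV}_N(\phi)|$ changes by at most $2M_1/N$ under a single-coordinate replacement, so McDiarmid's inequality yields, with probability at least $1-\epsilon$,
\begin{equation*}
\sup_{\phi}\bigl|L^{TV}(\phi)-L^{TV}_N(\phi)\bigr|\le\mathbb{E}\Bigl[\sup_{\phi}\bigl|L^{TV}(\phi)-L^{TV}_N(\phi)\bigr|\Bigr]+M_1\frac{\sqrt{-2\log\epsilon}}{\sqrt N},
\end{equation*}
which is exactly the last term of the claim. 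Symmetrization bounds the expectation by $2\mathcal{R}_N(\mathcal{G}^{TV})$ with $\mathcal{G}^{TV}=\{g^{TV}_\phi:\phi\in\mathcal{F}^m_{\SP_\tau}(B)\}$, and subadditivity of the Rademacher complexity splits this into the contributions of $\{\phi^2\}$, $\{\phi y\}$ and $\{|\nabla\phi|\}$ (resp.\ $\{|\nabla\phi|^2\}$ for Tikhonov). Talagrand's contraction lemma then handles the first two ($z\mapsto z^2$ is Lipschitz on the bounded range of $\phi$, and $|y|\le Y$), reducing them to $\mathcal{R}_N(\mathcal{F}^m_{\SP_\tau}(B))$; a vector-valued contraction reduces the gradient term to $\sum_{j=1}^d\mathcal{R}_N(\{\partial_j\phi\})$ (for Tikhonov one first peels off the Lipschitz factor $2\|\nabla\phi\|_\infty$).

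The technical core is then estimating $\mathcal{R}_N(\mathcal{F}^m_{\SP_\tau}(B))$ and $\mathcal{R}_N(\{\partial_j\phi\})$. For the function class, after discarding $c$ and writing $\phi-c$ as a convex combination of single-neuron terms, $1$-Lipschitzness of $\SP_\tau$ and the standard complexity bound for affine maps with $|\omega|_1\le1$, $|b|\le1$ on $\Omega$ give $\mathcal{R}_N(\mathcal{F}^m_{\SP_\tau}(B))\lesssim B\sqrt{\log d}/\sqrt N$. For the gradient, $\partial_j\phi(x)=\frac1m\sum_k a_k(\omega_k)_j\,\SP_\tau'(\omega_k\cdot x+b_k)$, and $\SP_\tau'$ is the logistic sigmoid evaluated at $\tau(\omega_k\cdot x+b_k)$, hence only $\tfrac\tau4$-Lipschitz in its affine argument — this is where the factor $\tau=\sqrt m$ enters. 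Estimating this class by Dudley's entropy integral, using that its realizations are parametrized by $\Theta\in\mathbb{R}^{m(d+2)+1}$ through an $O(B\tau)$-Lipschitz map, produces $\mathcal{R}_N(\{\partial_j\phi\})\lesssim\frac{\sqrt m}{\sqrt N}\bigl(Z_1+Z_2\sqrt{\log m}\bigr)$, the $\sqrt m$ coming from the effective parameter count and the $\sqrt{\log m}$ from $\log(B\tau)=\log(B\sqrt m)$ under the square root in the entropy integral. Combining the three contributions, tracking how $B$, $d$, $Y$ propagate into the constants, and feeding the result back through the symmetrization and McDiarmid steps gives the stated bound for $L^{TV}$; the Tikhonov case is identical except for the extra Lipschitz factor from squaring $|\nabla\phi|$, which yields $Z_3$ and $M_2$. \emph{The main obstacle is the gradient term}: controlling $\mathcal{R}_N(\{|\nabla\phi|\})$ and $\mathcal{R}_N(\{|\nabla\phi|^2\})$ forces one to work with $\SP_\tau'$, whose Lipschitz constant grows like $\tau=\sqrt m$, and it is this step — not the data term — that is responsible for the $\sqrt m$ factor and the $\sqrt{\log m}$ refinement, and where the bookkeeping of the dependence on $B$, $d$, $Y$ and $\epsilon$ is most delicate.
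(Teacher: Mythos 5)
Your proposal is correct and follows essentially the same route as the paper: the McDiarmid-plus-symmetrization step you describe is exactly the uniform bound $\sup_f|\frac1N\sum_i f(x_i)-\mathbb{E}f|\le 2\mathcal{R}_N(\mathcal{F})+M\sqrt{-2\log\epsilon}/\sqrt{N}$ that the paper imports as Theorem 4.10 of Wainwright, and the Rademacher-complexity estimates for the data term and the $|\nabla\phi|$, $|\nabla\phi|^2$ classes are exactly what the paper cites from Section 5 of Lu et al.\ rather than rederiving. The only difference is that you sketch proofs (contraction, Dudley entropy integral, the $\tau=\sqrt m$ Lipschitz constant of $\SP_\tau'$) of the ingredients the paper takes off the shelf, and your diagnosis that the gradient classes are the source of the $\sqrt m$ and $\sqrt{\log m}$ factors is consistent with the cited bounds.
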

The proofs of Lemma \ref{theorem:H1ReLU} to \ref{lemma:quadratureerror} are deferred to the appendix.
Combing the approximation error in Lemma \ref{theorem:H1SP} and the quadrature error in Lemma \ref{lemma:quadratureerror}, we can give the generalization error for the two problems using the framework we developed before.
\begin{theorem}
Let $\hat{u}=\arg\min_{u\in BV(\Omega)}L^{TV}(u)$, and $\mathcal{F}^m_{\SP_{\tau}}(B)$ be a set of softplus two-layer networks with $B>0$ and $\tau=\sqrt{m}$. We assume the solution $\hat{u}$ belongs to the Barron space $\B(\Omega)$ and $B>\Vert\hat{u}\Vert_{\B(\Omega)}$. Suppose $\hat{\phi}^N\in\mathcal{F}^m_{\SP_{\tau}}(B)$ be a $\delta$ minimizer of $L^{TV}_N$. Then
\begin{align*}
    L^{TV}(\hat{\phi}^N)-L^{TV}(\hat{u})\leq&(2Y+49B+\Vert\hat{u}\Vert_{L^2(\Omega)}+1)\frac{\Vert f\Vert_{\B}}{\sqrt{m}}\left(24\log(m)+65\right)\\
    &\quad +4\frac{\sqrt{m}}{\sqrt{N}}(Z_1+Z_2\sqrt{\log(m)})+2M_1\frac{\sqrt{-2\log(\epsilon)}}{\sqrt{N}}+2\delta,
\end{align*}
with probability at least $1-\epsilon$ over the choice of $(x_1,\dots,x_N)$, where $Y$ is the uniform bound of $y$; The constants $Z_1$, $Z_2$ and $M_1$ are defined in Lemma \ref{lemma:quadratureerror}.
\end{theorem}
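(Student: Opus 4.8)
The plan is to read this statement as the concrete instantiation of the general framework of Theorem~\ref{theorem:GE_TV}(b) with hypothesis set $V=\mathcal{F}^m_{\SP_{\tau}}(B)$, taking Assumption~1 from the approximation estimate of Lemma~\ref{theorem:H1SP} and Assumption~2 from the quadrature estimate of Lemma~\ref{lemma:quadratureerror}. The one place where Theorem~\ref{theorem:GE_TV} cannot be quoted verbatim is that the ROF data term $D(u,y)=u^2-2uy$ is only \emph{locally} Lipschitz in $u$, so I would rerun the short chain of inequalities from the proof of Theorem~\ref{theorem:GE_TV}(b) while keeping track of an effective Lipschitz modulus valid on the (uniformly bounded) family of functions that actually appear.

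\emph{Step 1: the approximant.} Since $B>\Vert\hat u\Vert_{\B(\Omega)}$, choose $\eta\in(0,\,B-\Vert\hat u\Vert_{\B(\Omega)}]$; because $\hat u\in\B(\Omega)$, Lemma~\ref{theorem:H1SP} with $\tau=\sqrt m$ produces a network $\hat\psi\in\mathcal{F}^m_{\SP_{\tau}}(\Vert\hat u\Vert_{\B(\Omega)}+\eta)\subseteq\mathcal{F}^m_{\SP_{\tau}}(B)=V$ with
\begin{equation*}
\Vert\hat\psi-\hat u\Vert_{H^1}\leq\frac{\Vert\hat u\Vert_{\B(\Omega)}+\eta}{\sqrt m}\bigl(24\log m+65\bigr).
\end{equation*}
Sending $\eta\downarrow 0$ in the final bound replaces the abstract approximation term $C_1\Vert\hat u\Vert_*/m^{\gamma/2}$ of Assumption~1 by $\tfrac{\Vert\hat u\Vert_{\B(\Omega)}}{\sqrt m}(24\log m+65)$, which is the factor written $\tfrac{\Vert f\Vert_{\B}}{\sqrt m}(24\log m+65)$ in the statement.

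\emph{Step 2: the effective Lipschitz constant.} From $D(u_1,y)-D(u_2,y)=(u_1-u_2)(u_1+u_2-2y)$ we get $|D(u_1,y)-D(u_2,y)|\leq(|u_1|+|u_2|+2|y|)\,|u_1-u_2|$. Repeating the estimate in the proof of Theorem~\ref{theorem:GE_TV}(a) with $u_1=\hat\psi$, $u_2=\hat u$ and a Cauchy--Schwarz step gives
\begin{equation*}
\mathbb{E}_{\rho_x}\bigl(|D(\hat\psi,y)-D(\hat u,y)|\bigr)\leq\bigl(\Vert\hat\psi\Vert_{L^2(\Omega)}+\Vert\hat u\Vert_{L^2(\Omega)}+2Y\bigr)\,\Vert\hat\psi-\hat u\Vert_{L^2(\Omega)},
\end{equation*}
while the TV term still contributes $\mathbb{E}_{\rho_x}(|\nabla\hat\psi-\nabla\hat u|)\leq\Vert\nabla\hat\psi-\nabla\hat u\Vert_{L^2(\Omega)}$, with coefficient $1$. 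It then remains to bound $\Vert\hat\psi\Vert_{L^2(\Omega)}\leq\Vert\hat\psi\Vert_{L^\infty(\Omega)}$: using $|c|\leq 2B$, $|a_k|\leq 16B$, the bound $|\omega_k\cdot x+b_k|\leq|\omega_k|_1\,|x|_\infty+|b_k|\leq 2$ on $\Omega$, and $0\leq\SP_{\tau}(z)\leq\max(z,0)+\tfrac{\log 2}{\tau}$ with $\tau=\sqrt m\geq\sqrt 2$, a direct estimate gives $\Vert\hat\psi\Vert_{L^\infty(\Omega)}\leq 49B$. Collecting the two coefficients, $L^{TV}(\hat\psi)-L^{TV}(\hat u)\leq(2Y+49B+\Vert\hat u\Vert_{L^2(\Omega)}+1)\Vert\hat\psi-\hat u\Vert_{H^1}$, which is exactly the approximation factor in the statement.

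\emph{Step 3: quadrature and assembly.} By Lemma~\ref{lemma:quadratureerror}, with probability at least $1-\epsilon$ over $(x_1,\dots,x_N)$,
\begin{equation*}
\sup_{\phi_m\in\mathcal{F}^m_{\SP_{\tau}}(B)}\bigl|L^{TV}_N(\phi_m)-L^{TV}(\phi_m)\bigr|\leq 2\frac{\sqrt m}{\sqrt N}\bigl(Z_1+Z_2\sqrt{\log m}\bigr)+M_1\frac{\sqrt{-2\log\epsilon}}{\sqrt N},
\end{equation*}
so Assumption~2 holds with $C_2(V,m,\epsilon)/\sqrt N$ equal to this right-hand side. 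Because the supremum already controls $L^{TV}_N-L^{TV}$ at both $\hat\phi^N$ and $\hat\psi$ on a single good event, no union bound is needed, and the chain of Theorem~\ref{theorem:GE_TV}(b) --- pass from $L^{TV}$ to $L^{TV}_N$, use that $\hat\phi^N$ is a $\delta$-minimizer of $L^{TV}_N$ and $\hat\psi\in V$, pass back to $L^{TV}$ --- invokes the quadrature bound twice, yielding $4\tfrac{\sqrt m}{\sqrt N}(Z_1+Z_2\sqrt{\log m})+2M_1\tfrac{\sqrt{-2\log\epsilon}}{\sqrt N}$ together with the $2\delta$ training term (exactly as in the general theorem). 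Adding the bound from Step~2 gives the claimed inequality. I expect Step~2 to be the real obstacle: because $D$ is quadratic rather than globally Lipschitz, one has to establish the uniform $L^\infty$ bound over all of $\mathcal{F}^m_{\SP_{\tau}}(B)$ (and for $\hat\psi$) precisely enough to produce the explicit constant $49B$, and one has to check that this same boundedness is what underlies the constants $Z_1,Z_2,M_1$ of Lemma~\ref{lemma:quadratureerror}, so that the approximation and quadrature parts of the estimate rest on a consistent set of hypotheses.
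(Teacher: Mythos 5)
Your proposal is correct and follows essentially the same route as the paper's own proof: the key step in both is the uniform bound $\Vert\phi\Vert_{L^\infty(\Omega)}\leq 49B$ over $\mathcal{F}^m_{\SP_{\tau}}(B)$, which converts the quadratic data term into an effective Lipschitz modulus $\lambda=2Y+49B+\Vert\hat{u}\Vert_{L^2(\Omega)}$ so that Theorem~\ref{theorem:GE_TV}(b) applies with Lemma~\ref{theorem:H1SP} supplying Assumption~1 (letting $\eta\downarrow 0$) and Lemma~\ref{lemma:quadratureerror} supplying Assumption~2. Your observation that the quadrature bound is used twice on a single high-probability event, so no union bound is needed, matches the paper's argument as well.
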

\textbf{Remark.} The functions in $BV(\Omega)$ may not necessarily belong to $H^1(\Omega)$. However, since we assume the solution belongs to the Barron space, the gradient of $\hat{u}$ exists \cite{barron1993universal}.
\begin{proof}
For any $\phi\in\mathcal{F}^m_{\SP_{\tau}}(B)$, we have $\phi(x)\leq B+16B\SP_{\tau}(2)\leq B+16B(2+\frac{1}{\tau})\leq 49B$ for $x\in\Omega$. Then,
\begin{equation*}
    |\phi-y|^2-|\hat{u}-y|^2=\langle\phi-\hat{u},\phi+\hat{u}-2y\rangle\leq\Vert\phi-\hat{u}\Vert_{L^2(\Omega)}((2Y+49B)|S^d|+\Vert\hat{u}\Vert_{L^2(\Omega)}).
\end{equation*}
We require the data term to be Lipschitz continuous in Theorem \ref{theorem:GE_TV}. However, in the proof of the theorem, we just need $D(\phi,y)-D(\hat{u},y)\leq\lambda\Vert\phi-\hat{u}\Vert_{L^2(\Omega)}$ for some $\lambda$ and any $\phi\in\mathcal{F}^m_{\SP_{\tau}}(B)$. We apply Theorem \ref{theorem:GE_TV} with $\lambda=2Y+49B+\Vert\hat{u}\Vert_{L^2(\Omega)}$ and let $\eta<B-\Vert \hat{u}\Vert_{\B(\Omega)}$ goes to 0. Then we get the generalization error. 
$\hfill\square$
\end{proof}
\begin{theorem}
Let $\hat{u}=\arg\min_{u\in H^1(\Omega)}L^{Tik}(u)$, and $\mathcal{F}^m_{\SP_{\tau}}(B)$ be a set of softplus two-layer networks with $B>0$ and $\tau=\sqrt{m}$. We assume the solution $\hat{u}$ belongs to the Barron space $\B(\Omega)$ and $B>\Vert\hat{u}\Vert_{\B(\Omega)}$. Suppose $\hat{\phi}^N\in\mathcal{F}^m_{\SP_{\tau}}(B)$ be a $\delta$ minimizer of $L^{Tik}_N$. Then
\begin{align*}
    \Vert\hat{\phi}^N-\hat{u}\Vert_{H^1(\Omega)}^2\leq&\frac{\Vert f\Vert_{\B}^2}{m}\left(24\log(m)+65\right)^2\\
    &\quad +4\frac{\sqrt{m}}{\sqrt{N}}(Z_1+Z_3\sqrt{\log(m)})+2M_2\frac{\sqrt{-2\log(\epsilon)}}{\sqrt{N}}+2\delta,
\end{align*}
with probability at least $1-\epsilon$ over the choice of $(x_1,\dots,x_N)$, where The constants $Z_1$, $Z_3$ and $M_2$ are defined in Lemma \ref{lemma:quadratureerror}.
\end{theorem}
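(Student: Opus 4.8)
The plan is to obtain this estimate as a direct application of Theorem~\ref{theorem:SE_TK}: its abstract Assumption~1 will be instantiated by the SoftPlus approximation bound of Lemma~\ref{theorem:H1SP}, and its abstract Assumption~2 by the Tikhonov half of Lemma~\ref{lemma:quadratureerror}. Concretely, I would take $U=H^1(\Omega)$, $V=\mathcal{F}^m_{\SP_{\tau}}(B)$ with $\tau=\sqrt m$, $\Vert\cdot\Vert_\ast=\Vert\cdot\Vert_{\B(\Omega)}$, and read off $C_1=24\log m+65$, $m^\gamma=m$, so that Assumption~1 reads $\inf_{\phi\in V}\Vert\phi-\hat u\Vert_{H^1(\Omega)}^2\le(\Vert\hat u\Vert_{\B}+\eta)^2(24\log m+65)^2/m$ for every $\eta>0$ --- exactly parallel to the preceding ROF-type theorem, which invokes Theorem~\ref{theorem:GE_TV} in the same way.

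The one structural point to verify is that the present data fidelity fits the hypotheses of Theorem~\ref{theorem:SE_TK}, which are stated for $D(u,y)=|u-y|^2$. Since $y$ is uniformly bounded, the integrand $u^2-2uy$ of $L^{Tik}$ differs from $|u-y|^2$ only by the $u$-independent, integrable term $y^2$; hence the minimizer $\hat u=\arg\min_{u\in H^1(\Omega)}L^{Tik}(u)$ and every energy gap $L^{Tik}(v)-L^{Tik}(\hat u)$ agree in both formulations, and $\hat u$ is the unique minimizer by strict convexity. Because the integration is against the uniform measure on $S^d$, the first-variation identity $\mathbb{E}_{\rho_x}\big((\hat u-y)v+\langle\nabla\hat u,\nabla v\rangle\big)=0$ holds for all $v\in H^1(\Omega)$ (here $\hat u\in H^1(\Omega)$ by construction, and by the Barron hypothesis its gradient is well defined), so expanding the quadratic gives $L^{Tik}(v)-L^{Tik}(\hat u)=\Vert v-\hat u\Vert_{H^1(\Omega)}^2$, which is precisely the identity driving the proof of Theorem~\ref{theorem:SE_TK}.

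Next I would feed in the two quantitative inputs. Lemma~\ref{theorem:H1SP} applied to $f=\hat u\in\B(\Omega)$ produces, for each $\eta>0$, a network $\hat\psi_m\in\mathcal{F}^m_{\SP_{\tau}}(\Vert\hat u\Vert_{\B}+\eta)$ with $\Vert\hat u-\hat\psi_m\Vert_{H^1(\Omega)}\le(\Vert\hat u\Vert_{\B}+\eta)(24\log m+65)/\sqrt m$; the strict hypothesis $B>\Vert\hat u\Vert_{\B}$ lets us pick $\eta<B-\Vert\hat u\Vert_{\B}$, so $\hat\psi_m\in\mathcal{F}^m_{\SP_{\tau}}(B)=V$ and Assumption~1 genuinely holds over $V$. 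The Tikhonov statement of Lemma~\ref{lemma:quadratureerror} supplies Assumption~2: with probability at least $1-\epsilon$, $\mathcal Q:=\sup_{\phi_m\in\mathcal{F}^m_{\SP_{\tau}}(B)}|L^{Tik}_N(\phi_m)-L^{Tik}(\phi_m)|\le 2\sqrt{m/N}\,(Z_1+Z_3\sqrt{\log m})+M_2\sqrt{-2\log\epsilon}/\sqrt N$. Identifying $C_2(V,m,\epsilon)/\sqrt N$ with $\mathcal Q$, Theorem~\ref{theorem:SE_TK}(b) applies verbatim and gives, on that event, $\Vert\hat\phi^N-\hat u\Vert_{H^1(\Omega)}^2\le(\Vert\hat u\Vert_{\B}+\eta)^2(24\log m+65)^2/m+2\mathcal Q+2\delta$. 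Letting $\eta\downarrow0$ ($\mathcal Q$ does not depend on $\eta$), writing $\Vert f\Vert_{\B}$ for $\Vert\hat u\Vert_{\B}$, and substituting $2\mathcal Q=4\sqrt{m/N}\,(Z_1+Z_3\sqrt{\log m})+2M_2\sqrt{-2\log\epsilon}/\sqrt N$ yields the asserted inequality.

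I do not anticipate a genuine obstacle: Lemmas~\ref{theorem:H1SP} and~\ref{lemma:quadratureerror} and Theorem~\ref{theorem:SE_TK} already carry all the analytic weight, and this result is essentially their assembly, entirely parallel to the ROF-type theorem above. The only items demanding a little care are (i) the reduction of $u^2-2uy$ to the $|u-y|^2$ framework together with the validity of the Euler--Lagrange identity under the uniform measure on $S^d$; (ii) ensuring the SoftPlus approximant lands in the prescribed parameter ball $\mathcal{F}^m_{\SP_{\tau}}(B)$ rather than a dilate of it, which is exactly what the strict inequality $B>\Vert\hat u\Vert_{\B}$ and the passage $\eta\to0$ secure; and (iii) the cosmetic bookkeeping of constants: the notational identification $\Vert f\Vert_{\B}=\Vert\hat u\Vert_{\B}$, the mild $m$-dependence absorbed into the symbol $C_1$, and the conservative $2\delta$.
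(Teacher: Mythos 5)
Your proposal is correct and follows exactly the route of the paper, whose own proof is a one-line citation of Theorem \ref{theorem:SE_TK}, Lemma \ref{theorem:H1SP}, and Lemma \ref{lemma:quadratureerror} followed by letting $\eta<B-\Vert\hat u\Vert_{\B(\Omega)}$ tend to zero. Your elaboration of the details the paper leaves implicit --- the reduction of $u^2-2uy$ to $|u-y|^2$, the Euler--Lagrange identity behind $L^{Tik}(v)-L^{Tik}(\hat u)=\Vert v-\hat u\Vert_{H^1}^2$, and the check that the approximant lands in $\mathcal{F}^m_{\SP_{\tau}}(B)$ --- is accurate and, if anything, more careful than the original.
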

\begin{proof}
The result is obtained by applying Theorem \ref{theorem:SE_TK}, Lemma \ref{theorem:H1SP} and Lemma \ref{lemma:quadratureerror}, and let $\eta<B-\Vert \hat{u}\Vert_{\B(\Omega)}$ goes to 0.
$\hfill\square$
\end{proof}
\textbf{Remark.} In these two theorems, we require the original solution $\hat{u}$ belongs to the Barron space. In \cite{barron1993universal}, we know functions with sufficiently high order square integrable derivatives belong to the Barron space. Besides, some regularity results \cite{lu2021priori} of certain types of PDEs can also guarantee $\hat{u}$ belongs to Barron type spaces. As we mentioned before, this condition is difficult for the ROF model to satisfy. To overcome this limitation, we need approximation results of neural networks to more general functions.

\section{Numerical experiments}\label{sec5}
\subsection{Experiment setup}
In this section, we will compare the gradient-based methods with other explicit regularization methods on image classification datasets. All the neural networks will be tested on both the original test samples and adversarial samples. Generally, there are two types of adversarial attack methods to generate adversarial samples: white box attack and black box attack. The white box attacks, like the fast gradient sign method (FGSM) \cite{goodfellow2015explaining} (Algorithm \ref{algo:FGSM}) and the projected gradient descent method (PGD) \cite{madry2018towards} (Algorithm \ref{algo:PGD}), use the information of the network structure and parameters to generate adversarial samples, and the black box attacks, like the One Pixel attack \cite{su2019one} and the transfer attack \cite{papernot2017practical} (Algorithm \ref{algo:transfer}), can only access the output of networks. 
Since the FGSM is a special case of the PGD attack, we only use the PGD attack for evaluating the robustness to white box attacks in this section. 

The regularization methods we implement are the total variation (TV), Tikhonov regularization (Tik), graphical total variation (GTV), graphical Tikhonov regularization (GTik), L2 penalty (L2), adversarial training (AT), and adversarial logit pairing (ALP). The standard neural network training without any regularization is set as the baseline method.
\begin{algorithm}
\caption{Fast gradient sign method (FGSM)}\label{algo:FGSM}
\begin{algorithmic}
\REQUIRE Targeted network $\phi$, a sample $x$ to be attacked and a step size $\delta$
\STATE Compute $dx=\text{sign}(\nabla_x D(\phi(x;\theta),y(x)))$
\RETURN $x+\delta * dx$
\end{algorithmic}
\end{algorithm}

\begin{algorithm}
\caption{Projected gradient descent method (PGD)}\label{algo:PGD}
\begin{algorithmic}
\REQUIRE Targeted network $\phi$, a sample $x$ to be attacked, the step size $\delta$, the perturbation bound $\epsilon$, and the number of iterations $N_{itr}$.
\STATE $x_0=x$
\FOR{i=1:$N_{itr}$}
\STATE $x_i=x_{i-1}+\delta*\text{sign}(\nabla_x D(\phi(x;\theta),y(x)))$
\STATE $dx=\text{Clip}(x_i-x,\min=-\epsilon,\max=\epsilon)$
\STATE $x_i=\text{Clip}(x+dx,\min=0,\max=1)$
\ENDFOR
\RETURN $x_N$
\end{algorithmic}
\end{algorithm}

\begin{algorithm}
\caption{Tranfer attack}\label{algo:transfer}
\begin{algorithmic}
\REQUIRE Targeted network $\phi$, a small set of training data $\tilde
{x}$, a sample $x$ to be attacked, the step size $\delta$, the perturbation bound $\epsilon$, and the number of iterations $N_{itr}$.
\STATE 1. Construct a substitue network $\tilde{\phi}$.
\STATE 2. Train $\tilde{\phi}$ with samples $\tilde{x}$ and labels $\arg\max(\phi(\tilde{x}))$.
\STATE 3. Generate the adversarial sample for $x$ by using the PGD algorithm on $\tilde{\phi}$.

\end{algorithmic}
\end{algorithm}

\subsection{Training with the MNIST dataset}
We first evaluate the methods on the MNIST dataset which consists of various gray scale hand-written digits images. We randomly generate 100 images from the MNIST dataset as the training set and another 100 images as the validation set which is used to determine the weight $\alpha$. The $\alpha$ we choose for different methods are listed in Table \ref{table:parameters_MNIST}. Generally, the performance of the networks are not very sensitive to the choice of $\alpha$ within certain ranges. The full MNIST test set is used to evaluate the test accuracy and 1,000 randomly selected test samples are used to evaluate the robustness. The neural network used here is a simple convolutional neural network (CNN) whose architecture is shown in Appendix. When training the network, we use the negative log likelihood loss (Pytorch NLLLoss) as the data term and the gradient terms in the regularization is directly computed by the back propagation. To reduce the variance of our results, we
do 50 independent trials and compute the averaged accuracy. During the training stage, we use the standard Adam algorithm \cite{Kingma2015Adam} with learning rate 0.001 and the number of epochs is 1,000. For the $L2$ penalty method, we implement it by the AdamW optimizer \cite{loshchilov2018decoupled}.

\begin{table}
\centering
\begin{tabular}{@{}lllllll@{}}
\toprule
TV \quad\quad    & Tik \quad\quad   & GTV \quad\quad   & GTik \quad\quad  & L2 \quad\quad & AT \quad\quad & ALP \quad\quad \\ \midrule
0.005 & 0.001 & 0.001 & 0.005 & 0.05 & 1. & 1.  \\ \bottomrule
\end{tabular}
\caption{\textit{The weight of regularization term for each method in the MNIST experiments.}}
\label{table:parameters_MNIST}\vspace{-2em}
\end{table}

The prediction accuracy are shown in Table \ref{table:accuracy_MNIST_100}. All the regularization methods can improve the prediction accuracy on the test set. The highest accuracy is obtained by the graph-based methods. In terms of the robustness, the ALP method is the most robust to the white box PGD attack. This is not surprising because the adversarial samples for ALP training are generated by the same attack method. For the black box one pixel attack and transfer attack, the TV and GTV methods are the most robust respectively. 

\begin{table}
\small
\centering
\begin{tabular}{lllllllll}
\hline
\multicolumn{1}{l|}{} & baseline   & TV               & Tik     & GTV     & GTik  & L2   & AT      & ALP              \\ \hline
\multicolumn{1}{l|}{no attack}  & 78.45\% &  83.69\% &  83.44\% &  \textbf{85.88\%} &  85.33\% & 78.77\% &  81.86\% & 82.6\% \\ 
\multicolumn{1}{l|}{PGD}           & 15.70\% & 39.88\%          & 38.42\% & 19.98\% & 16.81\% &23.46\% & 32.25\% & \textbf{47.89\%} \\
\multicolumn{1}{l|}{One Pixel}     & 37.78\% & \textbf{66.76\%} & 65.85\% & 36.37\% & 37.01\% & 37.32\% & 46.32\% & 48.77\%          \\
\multicolumn{1}{l|}{Transfer}       & 38.55\% & 61.19\% & 58.68\% & \textbf{64.77\%} & 64.17\% &42.41\% & 50.58\% & 57.35\%\\
\hline 
\end{tabular}
\caption{\textit{Prediction accuracy of different methods on original samples and adversarial samples (trained with 100 MNIST samples). The first row shows the prediction accuracy on the original test samples without any adversarial attack. The last three rows shows the prediction accuracy on the adversarial samples generated by different attack algorithms.}}
\label{table:accuracy_MNIST_100}\vspace{-2em}
\end{table}

We also train the network on 10,000 images randomly selected from the MNIST dataset. All the hyperparameters remain the same with the previous experiment. The prediction accuracy, which is the average of 5 independent trails, is listed in Table \ref{table:accuracy_MNIST_10000}. The accuracy of all the methods are very close when the number of training samples is sufficiently large. The ALP method is still the most robust to the PGD attack, and the gradient-based methods are the most robust to the black box attacks. We also notice that the graph-based methods are very unrobust with respect to PGD and one pixel attacks.

\begin{table}
\small
\centering
\begin{tabular}{lllllllll}
\hline
\multicolumn{1}{l|}{} & baseline  & TV               & Tik              & GTV     & GTik  &L2  & AT      & ALP              \\ \hline
\multicolumn{1}{l|}{no attack}   & 98.11\% & 98.23\% & 98.41\% & 98.65\% & \textbf{98.80\%} &98.64\% & 98.27\% & 98.10\% \\
\multicolumn{1}{l|}{PGD}            & 20.76\% & 84.00\%          & 83.94\%          & 15.46\%  & 25.16\% & 37.36\% & 27.08\% & \textbf{90.14\%} \\
\multicolumn{1}{l|}{One Pixel}     & 77.06\% & 91.76\%          & \textbf{91.90\%} & 25.18\% & 20.80\% &87.44\% & 76.94\% & 82.04\%          \\
\multicolumn{1}{l|}{Transfer}      & 78.18\% & 92.96\% & \textbf{93.40\%}          & 89.34\% &90.54\% & 69.18\% & 81.38\% & 91.92\%          \\
\hline 
\end{tabular}
\caption{\textit{Prediction accuracy of different methods on original samples and adversarial samples (trained with 10,000 MNIST samples). The first row shows the prediction accuracy on the original test samples without any adversarial attack. The last three rows shows the prediction accuracy on the adversarial samples generated by different attack algorithms.}}
\label{table:accuracy_MNIST_10000}\vspace{-3em}
\end{table}

\subsection{Training with the Fashin MNIST (FMNIST) dataset}
We also conduct experiments on the FMNIST dataset and the setup is the same with MNIST experiments. The weights of regularization for each method are listed in Table \ref{table:parameters_FMNIST_100} and Table \ref{table:parameters_FMNIST_10000}. The results are listed in Table \ref{table:accuracy_FMNIST_100} and Table \ref{table:accuracy_FMNIST_10000}.
\begin{table}
\small
\centering
\begin{tabular}{@{}lllllll@{}}
\toprule
TV \quad\quad    & Tik \quad\quad   & GTV \quad\quad   & GTik \quad\quad  & L2 \quad\quad & AT \quad\quad & ALP \quad\quad \\ \midrule
0.01 & 0.001 & 0.01 & 0.05 & 0.1 & 1. & 0.1  \\ \bottomrule
\end{tabular}
\caption{\textit{The weight of regularization term for each method in the FMNIST experiments (trained with 100 samples).}}
\label{table:parameters_FMNIST_100}\vspace{-3em}
\end{table}

\begin{table}
\small
\centering
\begin{tabular}{@{}lllllll@{}}
\toprule
TV \quad\quad    & Tik \quad\quad   & GTV \quad\quad   & GTik \quad\quad  & L2 \quad\quad & AT \quad\quad & ALP \quad\quad \\ \midrule
0.01 & 0.001 & 0.001 & 0.001 & 0.1 & 1. & 0.1  \\ \bottomrule
\end{tabular}
\caption{\textit{The weight of regularization term for each method in the FMNIST experiments (trained with 10,000 samples).}}
\label{table:parameters_FMNIST_10000}\vspace{-2em}
\end{table}

\begin{table}
\small
\centering
\begin{tabular}{lllllllll}
\hline
\multicolumn{1}{l|}{regularization} & baseline   & TV               & Tik     & GTV     & GTik  & L2   & AT      & ALP              \\ \hline
\multicolumn{1}{l|}{no attack}  & 67.91\% & 69.40\% & 69.03\% & 71.87\% & \textbf{72.30\%} & 67.87\% & 68.55\% & 69.00\% \\ 
\multicolumn{1}{l|}{PGD}           & 18.61\% & 35.01\% & 35.49\% & 4.40\%  & 9.38\%  & 19.12\% & 22.37\% & \textbf{51.42\%} \\
\multicolumn{1}{l|}{One Pixel}     & 48.74\% & 59.33\% & \textbf{59.56\%} & 51.67\% & 50.85\% & 50.01\% & 50.37\% & 53.60\%    \\
\multicolumn{1}{l|}{Transfer}  & 56.92\% & \textbf{63.99\%} & 63.87\% & 48.76\% & 55.92\% & 57.03\% & 58.07\% & 55.01\%  \\  
\hline 
\end{tabular}
\caption{\textit{Prediction accuracy of different methods on original samples and adversarial samples (trained with 100 FMNIST samples).}}
\label{table:accuracy_FMNIST_100}\vspace{-2em}
\end{table}

In the FMNIST experiments, we observe very similar results with the MNIST experiments. When the networks are trained with only 100 samples, almost all the regularization methods can significantly improve the prediction accuracy on the original dataset compared to the baseline method, and the graph-based methods achieve the highest accuracy. The ALP method is still the most robust with respect to the white box attacks and the gradient based methods are the most stable with respect to the black box attacks. We also notice that the GTV and GTik methods are very unstable under the PGD attack.

\begin{table}
\small
\centering
\begin{tabular}{lllllllll}
\hline
\multicolumn{1}{l|}{regularization} & baseline     & TV               & Tik     & GTV     & GTik  & L2   & AT      & ALP              \\ \hline
\multicolumn{1}{l|}{no attack}  & 88.81\%    & 88.71\%    & 88.22 \%   & 87.48\%    & \textbf{90.57\%}   & 88.62\%   & 89.04\%    & 87.03\% \\ 
\multicolumn{1}{l|}{PGD}    & 11.60\% & 40.12\% & 36.40\% & 0.48\%  & 0.12\%  & 6.54\%  & 23.60\% & \textbf{72.90\%}\\
\multicolumn{1}{l|}{One Pixel}   & 55.36\% & 71.54\% & 70.98\% & 27.78\% & 33.36\% & 53.86\% & 61.10\% & \textbf{77.20\%}   \\
\multicolumn{1}{l|}{Transfer}  & 65.38\% & \textbf{78.04\%} & 76.38\% & 21.14\% & 33.08\% & 39.17\% & 69.70\% & 24.30\%  \\  
\hline \\
\end{tabular}
\caption{\textit{Prediction accuracy of different methods on original samples and adversarial samples (trained with 10,000 FMNIST samples).}}
\label{table:accuracy_FMNIST_10000}\vspace{-2em}
\end{table}

When we train the networks using 10,000 samples, all the regularization methods achieve similar prediction accuracy on the original test set. Only the gradient-based methods can improve the stability of neural networks with respect to all attack methods. The ALP method is the most robust to the PGD attack and one pixel attack in this experiments, but its robustness against the transfer attack is much lower than the baseline. The graph-based methods are unstable under various adversarial attacks.

\section{Conclusion and future works}\label{sec6}
In this work, We use neural networks to solve some variational problems arising from machine learning applications. We give a general framework for the generalization error analysis of neural network approximations. We achieve generalization the error estimate based on two reasonable assumptions on the approximation error and the quadrature error. 

We evaluate the gradient-based regularization methods for neural networks along with their graphical extensions in the experiments. When testing the networks on the original test set, most regularization methods can outperform the baseline if the number of training samples is small and obtain similar results with the baseline method if the number of training samples is large. When the test samples are attacked by some adversarial algorithms, TV and Tikhonov are the only two methods that can significantly improve the adversarial accuracy against every attack method. 

The limitation of the gradient-based methods is the computational cost induced by computing the derivatives, which makes it difficult to apply to very large-scale problems. Besides, the approximation error of neural networks with complex structures is not easy to derive.

In the future works, we will be interested in the generalization error analysis for different types of variational models or PDEs. We may also try to apply the proposed framework to neural networks with complex structures. 

\section{Declaration}
\subsection{Funding}
This work is partially supported by the National Science Foundation of China and Hong Kong RGC Joint Research Scheme (NSFC/RGC 11961160718), and the fund of the Guangdong Provincial Key Laboratory of Computational Science and Material Design (No. 2019B030301001). The work of J. Yang is supported by the National Science Foundation of China (NSFC-11871264) and the Guangdong Basic and Applied Basic Research Foundation (2018A0303130123).
\subsection{Conflicts of interest}
Not applicable
\subsection{Availability of data and material}
Not applicable
\subsection{Code availability}
Not applicable

\appendix
\section{Proof of Lemma \ref{theorem:H1ReLU}}
\begin{proof}
For any $\eta>0$, there exist a complex measure $F(d\omega)=e^{i\zeta(\omega)}|\tilde{f}(\omega)|d\omega$ such that
\begin{align*}
    \bar{f}(x)=f(x)-f(0)&=Re\int_{\mathbb{R}^d}(e^{i\omega\cdot x}-1)F(d\omega)\\
    &=\int_{\mathbb{R}^d}(\cos(\omega\cdot x+\zeta(\omega))-\cos(\zeta(\omega)))|\tilde{f}(\omega)|d\omega\\
    &=\int_{\mathbb{R}^d}\frac{C_f}{(1+|\omega|_1)^2}\left(\cos(\omega\cdot x+\zeta(\omega))-\cos(\zeta(\omega)\right)\hat{F}(d\omega)\\
    &=\int_{\mathbb{R}^d} g(x,\omega)\hat{F}(d\omega),
\end{align*}
where $C_f=\int(1+|\omega|_1)^2|\tilde{f}(\omega)|d\omega\leq\Vert f\Vert_{\B(\Omega)}+\eta$; $\hat{F}(d\omega)=\frac{(1+|\omega|_1)^2}{C_f}|\tilde{f}(\omega)|d\omega$ is a probability distribution; and
\begin{equation*}
    g(x,\omega)=\frac{C_f}{(1+|\omega|_1)^2}\left(\cos(\omega\cdot x+\zeta(\omega))-\cos(\zeta(\omega)\right).
\end{equation*}
Define a set of function $G_{\cos}$ as:
\begin{equation}
    G_{\cos}:=\left\{\frac{\gamma}{(1+|\omega|_1)^2}\left(\cos(\omega\cdot x+b)-\cos(b)\right)\bigg| |\gamma|\leq C_f,b\in\mathbb{R}\right\}. \label{eq:Gcos}
\end{equation}
Consequently, $\bar{f}$ is in the $H^1$ closure of the convex hull of $G_{\cos}$. To see this, we suppose $\{\omega_k\}_{k=1}^m$ be a set of i.i.d. random variables generated from the distribution $\hat{F}(d\omega)$. Notice that for each $g(x,\omega)\sim\hat{F}$ and $x\in\Omega$, we have:
\begin{equation*}
|g(x,\omega)|\leq\frac{C_f}{(1+|\omega|_1)^2}|\omega|_1\leq C_f,    
\end{equation*}
and
\begin{equation*}
    |\nabla g(x,\omega)|_2\leq|\nabla g(x,\omega)|_1=\frac{C_f}{(1+|\omega|_1)^2}|\omega|_1|\sin(\omega\cdot x)|\leq C_f,
\end{equation*}
for any $x\in\Omega$.
Then, by the Fubini's theorem,
\begin{align*}
    & E_{\omega_k\sim\hat{F}}\left[\left\Vert \bar{f}-\frac{1}{m}\sum_{k=1}^m g(x,\omega_k)\right\Vert_{H^1(\Omega)}^2\right]\\
    =&\int_{\Omega}E_{\omega_k\sim\hat{F}}\left[\left|f(x)-\frac{1}{m}\sum_{k=1}^m g(x,\omega_k)\right|^2+\left|\nabla f(x)-\frac{1}{m}\sum_{k=1}^m \nabla g(x,\omega_k)\right|^2_2\right] dx \\
    \leq&\frac{1}{m}\int_{\Omega}E_{\omega\sim\hat{F}}\left[\left|g(x,\omega)\right|^2+\left|\nabla g(x,\omega)\right|_2^2\right] dx\\
    \leq& \frac{2C_f^2}{m}.
\end{align*}
The expected $H^1$ norm error would converge to 0 as $m$ goes to infinity, which implies that there exist a sequence of convex combination of functions in $G_{\cos}$ converges to $\bar{f}$.

We next define a one dimensional function $g(z)=\gamma(\cos(|\omega|_1z+b)-\cos(b))/(1+|\omega|_1)^2$ where $\gamma\leq C_f$ and $z\in[-1,1]$. Notice that $g(x,\omega)$ is just the composition of $g(z)$ and $z=\omega\cdot x/|\omega|_1$. Let $\tilde{g}(z)=g(z)+\gamma\sin(b)|\omega|_1z/(1+|\omega|_1)^2$. We can examine that $\Vert \tilde{g}^{(s)}\Vert_{L^{\infty}([-1,1])}\leq 2C_f$ for $s=0,1,2$ and $\tilde{g}(0)=\tilde{g}'(0)=0$.
\begin{lemma}[Lemma 4.5 in \cite{lu2021priori}]\label{lemma:linearinterpolation}
Given $g\in C^2([-1,1])$ with $\Vert g^{(s)}\Vert_{L^{\infty}([-1,1])}\leq B$ and $g'(0)=0$ for $s=0,1,2$. Define $\{z_k\}_{k=0}^{2m}=\{-1+k/m\}_{i=0}^{2m}$ be a uniform partition of $[-1,1]$. Then, there exist a ReLU network $g_m$ of the form:
\begin{equation*}
    g_m(z)=c+\frac{1}{2m}\sum_{k=1}^{2m}a_i\relu(\epsilon_kz+b_k),
\end{equation*}
where $c=g(0)$, $|a_k|\leq 4B$, $\epsilon_k\in\{-1,1\}$ and $|b_k|\leq 1$, such that
\begin{equation*}
    \Vert g-g_m\Vert_{W^{1,\infty}([-1,1])}\leq\frac{2B}{m}.
\end{equation*}
\end{lemma}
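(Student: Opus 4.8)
The plan is to take $g_m$ to be the continuous piecewise‑linear interpolant of $g$ at the nodes $z_k=-1+k/m$, $k=0,\dots,2m$, and then to verify two things separately: (i) this interpolant already achieves the claimed $W^{1,\infty}$ accuracy, and (ii) it can be rewritten in the prescribed single‑hidden‑layer ReLU form with the stated bounds on the weights. Since $g_m$ is continuous and affine on each subinterval with breakpoints only at the interior nodes $z_1,\dots,z_{2m-1}$, it is automatically a linear combination of a constant and functions $\relu(\pm z+b)$ with $|b|\le 1$; all the work is in controlling the coefficients.

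For (i) I would invoke the classical error bounds for linear interpolation of a $C^2$ function. On each subinterval of length $h=1/m$ one has $\|g-g_m\|_{L^\infty}\le\tfrac{h^2}{8}\|g''\|_{L^\infty}\le\tfrac{B}{8m^2}$, and differentiating the interpolation error (equivalently, applying the mean value theorem to consecutive divided differences) gives $\|g'-g_m'\|_{L^\infty}\le h\|g''\|_{L^\infty}\le\tfrac{B}{m}$. Adding these and using $m\ge 1$ yields $\|g-g_m\|_{W^{1,\infty}([-1,1])}\le\tfrac{B}{8m^2}+\tfrac{B}{m}\le\tfrac{2B}{m}$, which is exactly the desired estimate.

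For (ii) the key idea is to anchor the ReLU expansion at the center $z=0$ rather than at an endpoint. Since $0=z_m$ is a node, $g_m(0)=g(0)$, which fixes $c=g(0)$. I would then write $g_m$ as $g(0)$ plus a sum of hinges that fire outward from the center: at each interior node $z_k\ge 0$ use $\relu(z-z_k)$ (active for $z>z_k$) and at each interior node $z_k\le 0$ use $\relu(z_k-z)$ (active for $z<z_k$), placing one such unit at $z=0$ in each direction. This uses exactly $2m$ units, every ReLU argument vanishes at $z=0$ so the sum contributes nothing there and no separate affine term is needed, and one checks that the function so assembled is exactly $g_m$ on all of $[-1,1]$. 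The coefficient attached to the unit at node $z_k$ equals, up to the $\tfrac{1}{2m}$ normalization, the jump in the slope of $g_m$ at $z_k$, namely $g'(\eta_{k+1})-g'(\eta_k)$ for consecutive mean value points $\eta_k,\eta_{k+1}$ lying within $2/m$ of each other, hence bounded by $\tfrac{2}{m}\|g''\|_{L^\infty}\le\tfrac{2B}{m}$; for the two units at $z=0$ the relevant quantity is just a one‑sided slope $g'(\eta)$ with $|\eta|<1/m$, which is $<B/m$ precisely because $g'(0)=0$. Multiplying by $2m$ gives $|a_k|\le 4B$, while each offset is $|b_k|=|z_k|\le 1-\tfrac{1}{m}<1$ and each $\epsilon_k\in\{-1,1\}$, as required.

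The main obstacle is the bookkeeping in step (ii): one must choose the signs $\epsilon_k$ and offsets $b_k$ so that the outward‑firing hinges reassemble into $g_m$, and then verify that \emph{every individual} coefficient — not merely partial sums — obeys $|a_k|\le 4B$. It is here that the hypothesis $g'(0)=0$ is essential: anchoring the expansion at an endpoint would force a global linear term whose ReLU coefficient scales like $m$, whereas anchoring where $g'$ vanishes keeps all coefficients $O(B/m)$ before the $\tfrac{1}{2m}$ rescaling. The approximation estimates in step (i) are textbook and present no difficulty. (This statement is Lemma 4.5 of \cite{lu2021priori}, so one may alternatively simply cite it.)
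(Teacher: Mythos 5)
Your construction is correct. Note first that the paper itself offers no proof of this statement: it is imported verbatim as Lemma~4.5 of \cite{lu2021priori}, so there is nothing internal to compare against, and your closing remark that one may simply cite it is exactly what the authors do. Your self-contained argument is sound and is essentially the construction used in the cited source: the piecewise-linear interpolant at the uniform nodes, the textbook bounds $\Vert g-g_m\Vert_{L^\infty}\le B/(8m^2)$ and $\Vert g'-g_m'\Vert_{L^\infty}\le B/m$ (the latter via the mean value theorem applied to each nodal slope), and the re-expression as $2m$ outward-firing hinges anchored at the node $z_m=0$. The coefficient accounting checks out: each interior slope jump is $g'(\eta_{k})-g'(\eta_{k-1})$ with $|\eta_k-\eta_{k-1}|<2/m$, hence at most $2B/m$, and the two units based at the origin carry the one-sided slopes $g'(\eta)$ with $|\eta|<1/m$, which are at most $B/m$ precisely because $g'(0)=0$; after multiplying by the $2m$ normalization these give $|a_k|\le 4B$ and $|a_k|\le 2B$ respectively, while $\epsilon_k\in\{-1,1\}$ and $|b_k|=|z_k|\le 1-1/m$. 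You also correctly identify why the expansion must be anchored where $g'$ vanishes rather than at an endpoint. The only cosmetic point worth mentioning is that the conclusion holds whether $\Vert\cdot\Vert_{W^{1,\infty}}$ is read as the maximum or the sum of the two sup-norms, since $B/(8m^2)+B/m\le 2B/m$ for $m\ge 1$ either way.
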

Consequently, for any $g\in G_{\cos}$, there exist a convex combination of $\relu$ functions such that
\begin{align*}
    &\left\Vert \tilde{g}-\frac{1}{2m}\sum_{k=1}^{2m}a_i\relu(\epsilon_kz+b_k)\right\Vert_{W^{1,\infty}([-1,1])}\\
    =&\left\Vert g+\frac{\gamma\sin(b)|\omega|_1z}{(1+|\omega|_1)^2}-\frac{1}{2m}\sum_{k=1}^{2m}a_i\relu(\epsilon_kz+b_k)\right\Vert_{W^{1,\infty}([-1,1])}\\
    =&\left\Vert g+\frac{1}{2}(2a_0\relu(z+1))-\frac{1}{4m}\sum_{k=1}^{2m}2a_i\relu(\epsilon_kz+b_k)-a_0\right\Vert_{W^{1,\infty}([-1,1])}\\
    \leq&\frac{4C_f}{m},
\end{align*}
where $a_0=\frac{\gamma\sin(b)|\omega|_1}{(1+|\omega|_1)^2}$, $|a_0|\leq C_f$, $|a_k|\leq 8C_f$, $\epsilon_k\in\{-1,1\}$, and $|b_k|\leq 1$ for $k=1,\dots,2m$. Therefore, $g$ lies in the $H^1$-closure of the convex hull of the following set:
\begin{equation}
    G_{\relu}:=\left\{c+a\relu(\omega\cdot x+b)\big||a|\leq 16C_f,|\omega|_1=1,|b|\leq 1,|c|\leq C_f\right\}, \label{eq:GReLU}
\end{equation}
and so does $\bar{f}$.
For any $g\in G_{\relu}$, we have $\Vert g\Vert_{H^1(\Omega)}^2\leq ((32C_f+C_f)^2+(16C_f)^2)=1345C_f^2$.
Therefore, by the Mauray Lemma (Lemma 1 in \cite{barron1993universal}), there exist a two-layer ReLU network 
$$f_m=c+\frac{1}{m}\sum_{k=1}^ma_i\relu(\omega_k\cdot x+b_k),$$ 
where $|a_k|\leq 16C_f$, $|\omega_k|_1=1$, $|b_k|\leq 1$ and $|c|\leq 2C_f$, such that
\begin{equation*}
    \Vert f-f_m\Vert_{H^1(\Omega)}\leq \frac{\sqrt{1345}C_f}{\sqrt{m}}\leq \frac{\sqrt{1345}(\Vert f\Vert_{\B(\Omega)}+\eta)}{\sqrt{m}}.
\end{equation*}
Notice that $f(0)=\int_{\mathbb{R}^d}F(d\omega)\leq C_f$ has been added to the constant term $c$.
$\hfill\square$
\end{proof}

\section{Proof of Lemma \ref{theorem:H1SP}}
\begin{proof} 
Define a set of SP functions as
\begin{equation}
    G_{\SP_\tau}:=\{c+a\SP_\tau(\omega\cdot x+b)\big||a|\leq 16C_f,|\omega|_1=1,|b|\leq 1,|c|\leq C_f\},\label{eq:GSP}
\end{equation}
where $C_f$ is defined in the proof of Lemma \ref{theorem:H1ReLU}.
For each $g\in G_{\cos}$, there exist a ReLU network $g_m\in G_{\relu}$ such that $\Vert g-g_m\Vert_{W^{1,\infty}([-1,1])}$. If we replace the activation function of $g_m$ by $\SP_{\tau}$, we get a softplus network $g_{\tau,m}\in G_{\SP_{\tau}}$. Then, from the proof of lemma 4.7 in \cite{lu2021priori},
\begin{align*}
    \Vert g-g_{\tau,m}\Vert_{W^{1,\infty}([-1,1])}\leq&\Vert g-g_m\Vert_{W^{1,\infty}([-1,1])}+\Vert g_{\tau,m}-g_m\Vert_{W^{1,\infty}([-1,1])}\\
    \leq& \frac{4C_f}{m}+(1+\frac{1}{\tau})(\frac{8C_f}{m}+4C_fe^{-\tau/m})\\
    \leq& 12C_f\delta_\tau,
\end{align*}
where $\delta_\tau=\frac{1}{\tau}(1+\frac{1}{\tau})(\log(\frac{\tau}{3})+1)$. In Lemma \ref{theorem:H1ReLU}, we have proven that $\bar{f}(x)=f(x)-f(0)$ lies in the closure of the convex hull of $G_{\cos}$. Thus, there exist a function $f_\tau$ in the closure of the convex hull of $G_{\SP_\tau}$ such that
\begin{equation*}
    \Vert \bar{f}-f_\tau\Vert_{W^{1,\infty}([-1,1])}\leq 12C_f\delta_\tau.
\end{equation*}
For any $g_\tau\in G_{\SP_\tau}$, we can verify that
\begin{align*}
    \Vert g_\tau\Vert_{H^1(\Omega)}^2\leq& (16C_f)^2\left(\Vert\SP_\tau\Vert_{L^\infty([-2,2])}^2+\Vert\SP_\tau’\Vert_{L^\infty([-2,2])}^2\right)+(C_f)^2\\
    \leq& (16C_f)^2\left(|SP_\tau(2)|^2+\left|\frac{1}{1+e^{-2\tau}}\right|^2\right)+(C_f)^2\\
    \leq & (16C_f)^2\left(\left|\frac{1}{\tau}+2\right|^2+1\right)+(C_f)^2\\
    \leq & (16C_f)^2\left(\frac{1}{\tau}+3\right)^2+(C_f)^2.
\end{align*}
By Mauray's lemma, there exist a two-layer SP networks $f_m$, which is the convex combination of $m$ functions in $G_{\SP_\tau}$, such that
\begin{align*}
    \Vert \bar{f}-f_m\Vert_{H^1(\Omega)}=& \Vert f-(f(0)+f_m)\Vert_{H^1(\Omega)}\\
    \leq& \Vert \bar{f}-f_\tau\Vert_{H^1(\Omega)}+\Vert f_\tau-f_m\Vert_{H^1(\Omega)}\\
    \leq & 12C_f\delta_\tau+\frac{C_f(16(3+1/\tau)+1)}{\sqrt{m}}.
\end{align*}
Using the fact that $\tau=\sqrt{m}\geq 1$, we get
\begin{align*}
    &\Vert f-(f(0)+f_m)\Vert_{H^1(\Omega)}\\
    \leq&\frac{C_f}{\sqrt{m}}(24\log(m)+65)\\
    \leq&\frac{(\Vert f\Vert_{\B}+\eta)}{\sqrt{m}}\left(24\log(m)+65\right).
\end{align*}
$\hfill\square$
\end{proof}

\section{Proof of Lemma \ref{lemma:quadratureerror}}
\begin{proof}
Let $\mathcal{F}$ be a class of functions defined on $\Omega$, $(\sigma_i)_{i=1}^N$ be a set of i.i.d variables drawn from the distribution $P(\sigma_i=1)=P(\sigma_i=-1)=1/2$, and $(x_i)_{i}^N$ is a set of i.i.d. variables sampled from $\Omega$. Then, the the Rademacher complexity of $\mathcal{F}$ is defined as
\begin{equation*}
    \mathcal{R}_N(\mathcal{F})=E_{x_1,\dots,x_N}E_\sigma\left[\sup_{f\in\mathcal{F}}\left|\frac{1}{N}\sum_{i=1}^N\sigma_if(x_i)\right|\right].
\end{equation*}
\begin{lemma}[Section 5 in \cite{lu2021priori}]\label{lemma:appendix1}
Define the function class $\mathcal{F}^1(B)$, $\mathcal{F}^2(B)$ and $\mathcal{F}^3(B)$ as
\begin{align*}
    &\mathcal{F}^1(B):=\left\{\phi^2-2\phi y\bigg|\phi\in\mathcal{F}^m_{\SP_{\tau}(B)}\right\}\\
    &\mathcal{F}^2(B):=\left\{|\nabla\phi|\bigg|\phi\in\mathcal{F}^m_{\SP_{\tau}(B)}\right\}\\
    &\mathcal{F}^3(B):=\left\{|\nabla\phi|^2\bigg|\phi\in\mathcal{F}^m_{\SP_{\tau}(B)}\right\},
\end{align*}
where $y$ is assumed to be bounded by $Y$. Then we have
\begin{align*}
    &\mathcal{R}_N(\mathcal{F}^1(B))\leq Z_1\frac{\sqrt{m}}{\sqrt{N}}\\
    &\mathcal{R}_N(\mathcal{F}^2(B))\leq Z_2\frac{\sqrt{m\log(m)}}{\sqrt{N}}\\
    &\mathcal{R}_N(\mathcal{F}^3(B))\leq Z_3\frac{\sqrt{m\log(m)}}{\sqrt{N}},
\end{align*}
where $Z_1$ is a constant depends on $B$, $d$ and $Y$; $Z_2$ and $Z_3$ are constants depends on $B$ and $d$. 
\end{lemma}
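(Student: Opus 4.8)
The plan is to bound each of the three Rademacher complexities by reducing the corresponding composite class to the elementary ``single neuron'' linear class $\mathcal{G}:=\{x\mapsto\omega\cdot x+b:\,|\omega|_1\le 1,\ |b|\le1\}$ through repeated application of Talagrand's contraction lemma, and then to control $\mathcal{R}_N(\mathcal{G})$ directly. First I would record the consequences of the parameter bounds defining $\mathcal{F}^m_{\SP_\tau}(B)$: for $x\in\Omega$ one has $|\omega_k\cdot x+b_k|\le 2$, and since $\log(1+e^{2\tau})\le 2\tau+\log 2$ this gives $\SP_\tau(\omega_k\cdot x+b_k)\le 2+1/\tau$, hence $|\phi(x)|\le 49B$ exactly as in the proof of the generalization theorem above. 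We also need that $\SP_\tau$ is $1$-Lipschitz, that $\SP_\tau'$ equals $z\mapsto\mathrm{sigmoid}(\tau z)$ and is therefore $[0,1]$-valued and $(\tau/4)$-Lipschitz, and that $\nabla\phi(x)=\frac1m\sum_{k=1}^m a_k\SP_\tau'(\omega_k\cdot x+b_k)\,\omega_k$ satisfies $|\nabla\phi(x)|_2\le 16B$ because $|\omega_k|_2\le|\omega_k|_1=1$. Since $\mathcal{R}_N$ is unchanged under passing to the symmetric convex hull, the averaging $\frac1m\sum_{k=1}^m$ in both $\phi$ and $\nabla\phi$ contributes nothing beyond the single-neuron complexity, and the constant term of $\phi$ contributes only $O(B/\sqrt N)$; finally $\mathcal{R}_N(\mathcal{G})=E_\sigma|\tfrac1N\sum_i\sigma_i x_i|_\infty\le \max_i\|x_i\|_\infty\sqrt{2\log(2d)/N}$ by Massart's finite-maximum inequality.

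For $\mathcal{F}^1(B)$ I would write a generic element as $g\circ\phi$ with $g(t)=t^2-2t\,y(x)$; on the range $t\in[-49B,49B]$ and with $|y|\le Y$, $g$ is Lipschitz with constant $\le 2(49B)+2Y$, so the contraction lemma gives $\mathcal{R}_N(\mathcal{F}^1(B))\le (98B+2Y)\,\mathcal{R}_N(\mathcal{F}^m_{\SP_\tau}(B))$, and peeling off the $1$-Lipschitz $\SP_\tau$ together with the scaling $|a_k|\le 16B$ reduces this to $O\big(B(B+Y)\sqrt{\log d}/\sqrt N\big)$, which is of the stated form $Z_1\sqrt m/\sqrt N$ with $Z_1$ depending only on $B,d,Y$ (no $\sqrt m$ is actually needed here since the square involves no derivatives of $\phi$, so the $\sqrt m$ in the statement follows a fortiori). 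For $\mathcal{F}^2(B)$ and $\mathcal{F}^3(B)$ I would work coordinatewise: each $x\mapsto(\nabla\phi(x))_\ell=\frac1m\sum_k a_k(\omega_k)_\ell\,\SP_\tau'(\omega_k\cdot x+b_k)$ is an average of single neurons with coefficient bound $|a_k(\omega_k)_\ell|\le 16B$ and activation $\SP_\tau'$; contracting away $\SP_\tau'$ now costs its Lipschitz constant $\tau/4=\sqrt m/4$, which is precisely the source of the $\sqrt m$, while refining the contraction step with a Dudley/covering-number estimate for the neuron class (whose sup-norm covering number at scale $\epsilon$ is of order $(\tau/\epsilon)^{d+1}$) supplies the remaining $\sqrt{\log m}$. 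Since $v\mapsto|v|_2$ is $1$-Lipschitz and, on $|v|_2\le 16B$, $v\mapsto|v|_2^2$ is $(32B)$-Lipschitz, a vector-valued contraction inequality summing the $d$ coordinate contributions then yields $\mathcal{R}_N(\mathcal{F}^2(B))\le Z_2\sqrt{m\log m}/\sqrt N$ and $\mathcal{R}_N(\mathcal{F}^3(B))\le Z_3\sqrt{m\log m}/\sqrt N$ with $Z_2,Z_3$ depending only on $B$ and $d$.

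I expect the main obstacle to be bookkeeping rather than anything conceptual: keeping every constant explicit through the nested contraction steps, and in particular correctly separating the $\sqrt m$ factor (coming from the $\tau$-Lipschitz constant of $\SP_{\sqrt m}'$) from the $\sqrt{\log m}$ factor (coming from the covering-number refinement), since the bare Talagrand bound $\mathcal{R}_N(\psi\circ\mathcal F)\le L\,\mathcal{R}_N(\mathcal F)$ alone would not produce the cleanest dependence. A secondary technical point is that $\SP_\tau$ and $\SP_\tau'$ do not vanish at the origin, so the ``$\psi(0)=0$'' hypothesis of the contraction lemma must be arranged by first subtracting the $\theta$-independent affine part of each expression, exactly as was done in the proofs of Lemmas \ref{theorem:H1ReLU} and \ref{theorem:H1SP}; this only perturbs the constants. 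Since the three inequalities are verbatim the content of Section~5 of \cite{lu2021priori}, the argument ultimately amounts to transcribing that derivation with the ranges $|c|\le 2B$, $|a_k|\le 16B$, $|\omega_k|_1=1$, $|b_k|\le1$ of $\mathcal{F}^m_{\SP_\tau}(B)$ inserted and the Lipschitz constants above tracked through.
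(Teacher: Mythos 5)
The paper does not actually prove this lemma: it is imported verbatim as ``Section 5 in \cite{lu2021priori}'' and used as a black box inside the proof of Lemma \ref{lemma:quadratureerror}, so there is no in-paper argument to compare yours against. Your sketch therefore supplies a proof where the paper supplies only a citation, and the strategy you outline (uniform bounds on $\phi$, $\nabla\phi$ and $\SP_\tau'$ extracted from the parameter constraints of $\mathcal{F}^m_{\SP_\tau}(B)$; Talagrand contraction to strip the outer maps $t\mapsto t^2-2ty$, $v\mapsto|v|_2$, $v\mapsto|v|_2^2$; convex-hull invariance to reduce the averaged sum $\frac1m\sum_k$ to a single neuron; Massart's lemma for the linear class) is the standard one and matches what the cited reference does. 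The $\mathcal{F}^1(B)$ bound goes through as you describe, and your remark that no $\sqrt m$ is actually needed there is correct, so the stated $Z_1\sqrt m/\sqrt N$ holds a fortiori.

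The one genuinely fuzzy point is your account of where $\sqrt{m\log m}$ comes from for $\mathcal{F}^2(B)$ and $\mathcal{F}^3(B)$. You attribute $\sqrt m$ to contracting away $\SP_\tau'$ at cost $\tau/4=\sqrt m/4$ and the extra $\sqrt{\log m}$ to ``refining the contraction step with a Dudley/covering-number estimate,'' but these are alternative routes rather than composable refinements: pure contraction already gives $O(\sqrt m/\sqrt N)$ per coordinate class (stronger than the claimed bound, so the lemma would follow a fortiori, provided the $|\cdot|_2$ step is handled by a genuine vector contraction inequality such as Maurer's, with the resulting coordinate sum absorbed into the $d$-dependence of $Z_2,Z_3$), whereas a chaining argument on the whole class produces entropy of order $md\log(\tau B/\epsilon)$, i.e.\ $\sqrt m$ from the parameter count and $\sqrt{\log m}$ from $\log\tau=\tfrac12\log m$. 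Either route closes the argument, but you should commit to one; as written the two factors are credited to steps that would not be performed together. Two further details to watch: the parameters $\omega_k$ appear both as the multiplier $(\omega_k)_\ell$ and inside the activation, so the coordinatewise contraction must allow the suprema over the $d$ coordinate classes to remain coupled; and, as you note, $\SP_\tau$ and $\SP_\tau'$ do not vanish at $0$, so the centering required by the contraction lemma has to be arranged explicitly.
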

\begin{lemma}[Theorem 4.10 in \cite{wainwright2019high}]\label{lemma:appendix2}
Let $\mathcal{F}$ be a class of functions uniformly bounded by $M$ and $(x_1,\dots,x_N)$ be i.i.d sampled from $\Omega$, then for any $0<\epsilon<1$,
\begin{equation*}
    \sup_{f\in\mathcal{F}}\left|\frac{1}{N}\sum_{i=1}^Nf(x_i)-E_x(f(x))\right|\leq 2\mathcal{R}_N(\mathcal{F})+M\frac{\sqrt{-2\log(\epsilon)}}{\sqrt{N}},
\end{equation*}
with probability at least $1-\epsilon$. 
\end{lemma}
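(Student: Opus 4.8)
This lemma is a textbook uniform-convergence bound (the paper in fact quotes it from Wainwright), so strictly speaking no argument is required; were I to prove it, the plan is the standard two-step combination of a concentration inequality with a symmetrization argument.

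\textbf{Step 1: concentration via bounded differences.} First I would set $Z(x_1,\dots,x_N) := \sup_{f\in\mathcal{F}} \left| \frac{1}{N}\sum_{i=1}^N f(x_i) - E_x(f(x)) \right|$ and observe that, since every $f\in\mathcal{F}$ is bounded by $M$, changing a single argument $x_i$ to any $x_i'$ alters $\frac1N\sum_j f(x_j)$ by at most $2M/N$ for every $f$, hence alters $Z$ by at most $2M/N$ as well (a supremum of functions with a common modulus of variation inherits that modulus). McDiarmid's inequality with constants $c_i=2M/N$ then gives $P\!\left(Z \ge E[Z] + t\right) \le \exp\!\left(-\,2t^2 / \textstyle\sum_i c_i^2\right) = \exp\!\left(-\,Nt^2/(2M^2)\right)$. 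Taking $t = M\sqrt{-2\log\epsilon}/\sqrt{N}$ makes the right-hand side equal to $\epsilon$, so with probability at least $1-\epsilon$ one has $Z \le E[Z] + M\sqrt{-2\log\epsilon}/\sqrt{N}$.

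\textbf{Step 2: symmetrization.} It remains to show $E[Z]\le 2\mathcal{R}_N(\mathcal{F})$. I would introduce an independent ghost sample $(x_i')_{i=1}^N$ with the same law, write $E_x(f(x)) = E_{x'}\!\left[\frac1N\sum_i f(x_i')\right]$, and pull that expectation outside the supremum and the absolute value by Jensen, obtaining $E[Z] \le E\!\left[\sup_f \left|\frac1N\sum_i \big(f(x_i)-f(x_i')\big)\right|\right]$. Since each $f(x_i)-f(x_i')$ is symmetric about $0$ and the pairs $(x_i,x_i')$ are i.i.d., multiplying the $i$-th term by a Rademacher sign $\sigma_i$ leaves the joint law unchanged; the triangle inequality then splits the sum and, using that $x$ and $x'$ have the same distribution, bounds $E[Z]$ by $2\,E_{x,\sigma}\!\left[\sup_f \left|\frac1N\sum_i \sigma_i f(x_i)\right|\right] = 2\mathcal{R}_N(\mathcal{F})$. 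Combining the two steps yields the stated inequality.

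\textbf{Main difficulty.} There is no real obstacle here — the argument is routine. The only points needing a little care are that the bounded-differences constant $2M/N$ is genuinely uniform over $\mathcal{F}$ (which is exactly where the uniform bound $M$ is used) and the minor measurability/Jensen bookkeeping in the symmetrization step; neither requires new ideas, and in the context of this paper the lemma is simply cited rather than reproved.
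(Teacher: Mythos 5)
Your proposal is correct: the paper does not prove this lemma at all but simply cites it as Theorem 4.10 of Wainwright, and your McDiarmid-plus-symmetrization argument is precisely the standard proof of that cited result, with the constants checking out ($c_i=2M/N$ gives tail $\exp(-Nt^2/(2M^2))$, and $t=M\sqrt{-2\log\epsilon}/\sqrt{N}$ makes this equal to $\epsilon$). Nothing is missing.
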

Combining Lemma \ref{lemma:appendix1} and \ref{lemma:appendix2}, we have probability of
\begin{equation*}
    \sup_{\phi_m\in\mathcal{F}^m_{\SP_{\tau}}(B)}\left|L^{TV}_N(\phi_m)-L^{TV}(\phi_m)\right|\leq 2\frac{\sqrt{m}}{\sqrt{N}}(Z_1+Z_2\sqrt{\log(m)})+M_1\frac{\sqrt{-2\log(\epsilon)}}{\sqrt{N}}
\end{equation*}
and
\begin{equation*}
     \sup_{\phi_m\in\mathcal{F}^m_{\SP_{\tau}}(B)}\left|L^{Tik}_N(\phi_m)-L^{Tik}(\phi_m)\right|\leq 2\frac{\sqrt{m}}{\sqrt{N}}(Z_1+Z_3\sqrt{\log(m)})+M_2\frac{\sqrt{-2\log(\epsilon)}}{\sqrt{N}},
\end{equation*}
are at least $1-\epsilon$ respectively, where $M_1=\sup_{f_1\in\mathcal{F}^1(B),f_2\in\mathcal{F}^2(B)}\Vert f_1+f_2\Vert_{L^{\infty}(\Omega)}$ and $M_2=\sup_{f_1\in\mathcal{F}^1(B),f_2\in\mathcal{F}^3(B)}\Vert f_1+f_2\Vert_{L^{\infty}(\Omega)}$ are constants depends on $B$ and $Y$.
$\hfill\square$
\end{proof}
\section{CNN Architecture}
\textbf{The CNN model used in the MNIST expreiments:}
\begin{lstlisting}[basicstyle=\small]
Model: "CNN_MNIST"
_______________________________________________________________
Layer (type)                 Output Shape              Param #
===============================================================
conv_1 (Conv2D)              (None, 28, 28, 16)        160
_______________________________________________________________
pooling_1 (MaxPooling2D)     (None, 14, 14, 16)        0
_______________________________________________________________
relu_1 (Activation)          (None, 14, 14, 16)        0
_______________________________________________________________
conv_2 (Conv2D)              (None, 14, 14, 64)        9280
_______________________________________________________________
relu_2 (Activation)          (None, 14, 14, 64)        0
_______________________________________________________________
pooling_2 (MaxPooling2)      (None, 7, 7, 64)          0
_______________________________________________________________
flatten (Flatten)            (None, 3136)              0
_______________________________________________________________
dense (Dense)                (None, 1000)              3137000
_______________________________________________________________
activation_2 (Activation)    (None, 1000)              0
_______________________________________________________________
dense_1 (Dense)              (None, 10)                10010
===============================================================
Total params: 3,156,450
Trainable params: 3,156,450
Non-trainable params: 0
_______________________________________________________________
\end{lstlisting}
\textbf{The CNN model used in the Fashion MNIST expreiments:}
\begin{lstlisting}[basicstyle=\small]
Model: "CNN_FMNIST"
_______________________________________________________________
Layer (type)                 Output Shape              Param #
===============================================================
conv_1 (Conv2D)              (None, 28, 28, 16)        160
_______________________________________________________________
relu_1 (Activation)          (None, 28, 28, 16)        0
_______________________________________________________________
conv_2 (Conv2D)              (None, 28, 28, 16)        2320
_______________________________________________________________
pooling_1 (MaxPooling2D)     (None, 14, 14, 16)        0
_______________________________________________________________
relu_2 (Activation)          (None, 14, 14, 16)        0
_______________________________________________________________
conv_3 (Conv2D)              (None, 14, 14, 64)        9280
_______________________________________________________________
relu_3 (Activation)          (None, 14, 14, 64)        0
_______________________________________________________________
conv_4 (Conv2D)              (None, 14, 14, 64)        36928
_______________________________________________________________
pooling_2 (MaxPooling2)      (None, 7, 7, 64)          0
_______________________________________________________________
relu_4 (Activation)          (None, 7, 7, 64)          0
_______________________________________________________________
flatten (Flatten)            (None, 3136)              0
_______________________________________________________________
dense_1 (Dense)              (None, 1000)              3137000
_______________________________________________________________
relu_5 (Activation)          (None, 1000)              0
_______________________________________________________________
dense_2 (Dense)              (None, 10)                10010
===============================================================
Total params: 3,195,698
Trainable params: 3,195,698
Non-trainable params: 0
_______________________________________________________________

\end{lstlisting}

\bibliographystyle{unsrt}
\bibliography{references}

\begin{thebibliography}{10}

\bibitem{goodfellow2016deep}
Ian Goodfellow, Yoshua Bengio, Aaron Courville, and Yoshua Bengio.
\newblock {\em Deep learning}.
\newblock MIT Press Cambridge, 2016.

\bibitem{higham2019deep}
Catherine~F Higham and Desmond~J Higham.
\newblock Deep learning: An introduction for applied mathematicians.
\newblock {\em SIAM Review}, 61(4):860--891, 2019.

\bibitem{sirignano2018dgm}
Justin Sirignano and Konstantinos Spiliopoulos.
\newblock Dgm: A deep learning algorithm for solving partial differential
  equations.
\newblock {\em Journal of Computational Physics}, 375:1339--1364, 2018.

\bibitem{weinan2018deep}
E~Weinan and Bing Yu.
\newblock The deep \uppercase{R}itz method: A deep learning-based numerical
  algorithm for solving variational problems.
\newblock {\em Communications in Mathematics and Statistics}, 6(1):1--12, 2018.

\bibitem{han2018solving}
Jiequn Han, Arnulf Jentzen, and E~Weinan.
\newblock Solving high-dimensional partial differential equations using deep
  learning.
\newblock {\em Proceedings of the National Academy of Sciences},
  115(34):8505--8510, 2018.

\bibitem{barron1993universal}
Andrew~R Barron.
\newblock Universal approximation bounds for superpositions of a sigmoidal
  function.
\newblock {\em IEEE Transactions on Information Theory}, 39(3):930--945, 1993.

\bibitem{cybenko1989approximation}
George Cybenko.
\newblock Approximation by superpositions of a sigmoidal function.
\newblock {\em Mathematics of Control, Signals and Systems}, 2(4):303--314,
  1989.

\bibitem{hornik1989multilayer}
Kurt Hornik, Maxwell Stinchcombe, Halbert White, et~al.
\newblock Multilayer feedforward networks are universal approximators.
\newblock {\em Neural Networks}, 2(5):359--366, 1989.

\bibitem{zhou2020universality}
Ding-Xuan Zhou.
\newblock Universality of deep convolutional neural networks.
\newblock {\em Applied and Computational Harmonic Analysis}, 48(2):787--794,
  2020.

\bibitem{goodfellow2015explaining}
Ian Goodfellow, Jonathon Shlens, and Christian Szegedy.
\newblock Explaining and harnessing adversarial examples.
\newblock In {\em International Conference on Learning Representations}, 2015.

\bibitem{szegedy2013intriguing}
Christian Szegedy, Wojciech Zaremba, Ilya Sutskever, Joan Bruna, Dumitru Erhan,
  Ian Goodfellow, and Rob Fergus.
\newblock Intriguing properties of neural networks.
\newblock In {\em International Conference on Learning Representations}, 2014.

\bibitem{madry2018towards}
Aleksander Madry, Aleksandar Makelov, Ludwig Schmidt, Dimitris Tsipras, and
  Adrian Vladu.
\newblock Towards deep learning models resistant to adversarial attacks.
\newblock In {\em International Conference on Learning Representations}, 2018.

\bibitem{su2019one}
Jiawei Su, Danilo~Vasconcellos Vargas, and Kouichi Sakurai.
\newblock One pixel attack for fooling deep neural networks.
\newblock {\em IEEE Transactions on Evolutionary Computation}, 23(5):828--841,
  2019.

\bibitem{kannan2018adversarial}
Harini Kannan, Alexey Kurakin, and Ian Goodfellow.
\newblock Adversarial logit pairing.
\newblock {\em arXiv preprint arXiv:1803.06373}, 2018.

\bibitem{rudin1992nonlinear}
Leonid~I Rudin, Stanley Osher, and Emad Fatemi.
\newblock Nonlinear total variation based noise removal algorithms.
\newblock {\em Physica D: Nonlinear Phenomena}, 60(1-4):259--268, 1992.

\bibitem{chan2001active}
Tony~F Chan and Luminita~A Vese.
\newblock Active contours without edges.
\newblock {\em IEEE Transactions on Image Processing}, 10(2):266--277, 2001.

\bibitem{vese2002multiphase}
Luminita~A Vese and Tony~F Chan.
\newblock A multiphase level set framework for image segmentation using the
  \uppercase{M}umford and \uppercase{S}hah model.
\newblock {\em International Journal of Computer Vision}, 50(3):271--293, 2002.

\bibitem{tikhonov2013numerical}
Andrei~Nikolaevich Tikhonov, AV~Goncharsky, VV~Stepanov, and Anatoly~G Yagola.
\newblock {\em Numerical methods for the solution of ill-posed problems},
  volume 328.
\newblock Springer Science \& Business Media, 2013.

\bibitem{oberman2020partial}
Adam~M Oberman.
\newblock Partial differential equation regularization for supervised machine
  learning.
\newblock In {\em 75 Years of Mathematics of Computation: Symposium on
  Celebrating 75 Years of Mathematics of Computation, November 1-3, 2018, the
  Institute for Computational and Experimental Research in Mathematics (ICERM),
  Providence, Rhode Island}, volume 754, page 177. American Mathematical Soc.,
  2020.

\bibitem{raissi2018hidden}
Maziar Raissi and George~Em Karniadakis.
\newblock Hidden physics models: Machine learning of nonlinear partial
  differential equations.
\newblock {\em Journal of Computational Physics}, 357:125--141, 2018.

\bibitem{raissi2019physics}
Maziar Raissi, Paris Perdikaris, and George~E Karniadakis.
\newblock Physics-informed neural networks: A deep learning framework for
  solving forward and inverse problems involving nonlinear partial differential
  equations.
\newblock {\em Journal of Computational Physics}, 378:686--707, 2019.

\bibitem{mishra2020estimates}
Siddhartha Mishra and Roberto Molinaro.
\newblock Estimates on the generalization error of physics informed neural
  networks (pinns) for approximating pdes.
\newblock {\em SAM Research Report}, 2020, 2020.

\bibitem{mishra2020estimates2}
Siddhartha Mishra and Roberto Molinaro.
\newblock Estimates on the generalization error of physics informed neural
  networks (pinns) for approximating pdes ii: A class of inverse problems.
\newblock {\em SAM Research Report}, 2020, 2020.

\bibitem{lu2021priori}
Jianfeng Lu, Yulong Lu, and Min Wang.
\newblock A priori generalization analysis of the deep ritz method for solving
  high dimensional elliptic equations.
\newblock {\em arXiv preprint arXiv:2101.01708}, 2021.

\bibitem{muller2021error}
Johannes Müller and Marius Zeinhofer.
\newblock Error estimates for the variational training of neural networks with
  boundary penalty.
\newblock {\em arXiv preprint arXiv:2103.01007}, 2021.

\bibitem{chambolle2004algorithm}
Antonin Chambolle.
\newblock An algorithm for total variation minimization and applications.
\newblock {\em Journal of Mathematical Imaging and Vision}, 20(1-2):89--97,
  2004.

\bibitem{wu2010augmented}
Chunlin Wu and Xue-Cheng Tai.
\newblock Augmented lagrangian method, dual methods, and split bregman
  iteration for \uppercase{ROF}, vectorial \uppercase{TV}, and high order
  models.
\newblock {\em SIAM Journal on Imaging Sciences}, 3(3):300--339, 2010.

\bibitem{goldstein2009split}
Tom Goldstein and Stanley Osher.
\newblock The split bregman method for l1-regularized problems.
\newblock {\em SIAM Journal on Imaging Sciences}, 2(2):323--343, 2009.

\bibitem{chan1999nonlinear}
Tony~F Chan, Gene~H Golub, and Pep Mulet.
\newblock A nonlinear primal-dual method for total variation-based image
  restoration.
\newblock {\em SIAM Journal on Scientific Computing}, 20(6):1964--1977, 1999.

\bibitem{yuan2010study}
Jing Yuan, Egil Bae, and Xue-Cheng Tai.
\newblock A study on continuous max-flow and min-cut approaches.
\newblock In {\em 2010 Ieee Computer Society Conference on Computer Vision and
  Pattern Recognition}, pages 2217--2224. IEEE, 2010.

\bibitem{drucker1992improving}
Harris Drucker and Yann Le~Cun.
\newblock Improving generalization performance using double backpropagation.
\newblock {\em IEEE Transactions on Neural Networks}, 3(6):991--997, 1992.

\bibitem{bishop1995training}
Chris~M Bishop.
\newblock Training with noise is equivalent to \uppercase{T}ikhonov
  regularization.
\newblock {\em Neural Computation}, 7(1):108--116, 1995.

\bibitem{wang2020adversarial}
Bao Wang, Alex Lin, Penghang Yin, Wei Zhu, Andrea~L Bertozzi, and Stanley~J
  Osher.
\newblock Adversarial defense via the data-dependent activation, total
  variation minimization, and adversarial training.
\newblock {\em Inverse Problems and Imaging}, 15(1):129, 2020.

\bibitem{papernot2017practical}
Nicolas Papernot, Patrick McDaniel, Ian Goodfellow, Somesh Jha, Z~Berkay Celik,
  and Ananthram Swami.
\newblock Practical black-box attacks against machine learning.
\newblock In {\em Proceedings of the 2017 ACM on Asia conference on computer
  and communications security}, pages 506--519, 2017.

\bibitem{Kingma2015Adam}
Diederik~P. Kingma and Jimmy Ba.
\newblock Adam: {A} method for stochastic optimization.
\newblock In Yoshua Bengio and Yann LeCun, editors, {\em 3rd International
  Conference on Learning Representations, {ICLR} 2015, San Diego, CA, USA, May
  7-9, 2015, Conference Track Proceedings}, 2015.

\bibitem{loshchilov2018decoupled}
Ilya Loshchilov and Frank Hutter.
\newblock Decoupled weight decay regularization.
\newblock In {\em International Conference on Learning Representations}, 2018.

\bibitem{wainwright2019high}
Martin~J Wainwright.
\newblock {\em High-dimensional statistics: A non-asymptotic viewpoint}.
\newblock Cambridge University Press, 2019.

\end{thebibliography}

\end{document}